\documentclass[runningheads]{llncs}

\usepackage{eccv}

\usepackage{eccvabbrv}

\usepackage{graphicx}
\usepackage{booktabs}
\usepackage{multirow}
\usepackage{pifont}

\usepackage[accsupp]{axessibility}  % Improves PDF readability for those with disabilities.

\usepackage{hyperref}

\usepackage{orcidlink}

\begin{document}

% ---------------------------------------------------------------
% REVIEW: Replace with your title
\title{PUF: Plug-and-Play Uncertainty-Aware Fusion for Online 3D Scene Graph Generation} 

% REVIEW: If the paper title is too long for the running head, you can set
% an abbreviated paper title here. If not, comment out.
\titlerunning{PUF for Online 3D Scene Graph Generation}

% FINAL: Replace with your author list. 
% Include the authors' OCRID for the camera-ready version, if at all possible.
\author{Yi Yang\inst{1}\orcidlink{0009-0001-1362-2269} \and
Myrna Castillo\inst{2,3}\orcidlink{0000-0001-6814-261X} \and
Bodo Rosenhahn\inst{1}\orcidlink{0000-0003-3861-1424} \and \\
Michael Ying Yang\inst{3}\orcidlink{0000-0002-0649-9987}\thanks{Corresponding author.}}

% FINAL: Replace with an abbreviated list of authors.
\authorrunning{Y. Yang et al.}
% First names are abbreviated in the running head.
% If there are more than two authors, 'et al.' is used.

\institute{
Leibniz Universität Hannover, Germany \and
Istituto Italiano di Tecnologia, Italy \and
University of Bath, UK
}

\maketitle

\begin{abstract}
Online 3D scene graph generation builds a persistent, structured representation of a scene by incrementally fusing 2D observations into a global 3D graph. 
Existing online methods treat this fusion as a fully deterministic pipeline,
where we identify three sources of uncertainty that are overlooked: observation, 2D model, and 3D representation.
We propose PUF: a Plug-and-play, Uncertainty-aware, and training-free Fusion framework. 
Scene graph node association is reformulated as a probabilistic likelihood over semantic and spatial factors, replacing binary accept/reject gates.
Dirichlet evidence accumulation distributes class and relationship evidence across plausible candidates proportional to association likelihood. An optional class-conditional prior completes edges for sparsely or never co-observed object pairs. 
We instantiate PUF with both a 3D Gaussian and a 3D voxel backend and observe consistent improvements,
demonstrating its ability to generalize across different representations.
Experiments on the 3DSSG and ReplicaSSG benchmarks show that our method substantially outperforms existing approaches while maintaining real-time latency. 
These results establish uncertainty-aware fusion as a principled and effective paradigm for online 3D scene understanding.
%Codes will be released upon publication.
The source code is publicly available at \url{https://github.com/yyyyangyi/PUF}.
\keywords{Scene graph generation \and  Uncertainty \and Plug-and-Play}
\end{abstract}

% ============================================================
\section{Introduction}
\label{sec:intro}
% ============================================================

A 2D Scene Graph (SG) describes an image as a structured representation in which nodes correspond to detected objects and directed edges encode their pairwise relationships. A 3D scene graph lifts this abstraction into metric space. 
The task of online 2D-to-3D Scene Graph Generation (SGG) is to construct a global 3D SG incrementally from a stream of RGB-D frames. 3D SGs provide the high-level scene abstraction needed for downstream tasks such as embodied navigation~\cite{singh2023scene}, robotic manipulation~\cite{zhu2021hierarchical}, and spatial question answering~\cite{wu20243d}. Incremental construction supports real-time applications for which a complete scene reconstruction is unavailable or impractical~\cite{shah2021ving}, and lifting 2D predictions online into 3D leverages powerful 2D SGG models trained on richly annotated datasets~\cite{krishna2017visual} without the prohibitive cost of 3D SG annotation~\cite{wald2019rio,wald2020learning,yang20234d}.

In the online setting, a global 3D SG is built by fusing a continuous stream of 2D observations, \eg, a video sequence, into a persistent graph structure. With the camera's field of view covering only a fraction of the scene at any instant, every such observation is partial and inherently uncertain. Accurately fusing these partial observations is non-trivial, as it requires reasoning about \emph{how much} to trust each observation. However, existing online methods discard this information by committing to hard decisions at every stage. We identify three distinct sources of uncertainty, as illustrated in Fig.~\ref{fig:teaser}. 
\emph{(a)} Observations are uncertain as objects are truncated. Relationships are particularly subject to such noise since they can only be predicted when both endpoint objects are simultaneously well observed.
\emph{(b)} 2D SGG models are uncertain. They produce soft class and relationship distributions that cannot be collapsed to hard predictions without loss. 
\emph{(c)} Object representation in 3D is approximate. Back-projecting a 2D bounding box through a single noisy depth frame without committing to a full 3D reconstruction yields uncertain 3D position and spatial extent. 

\begin{figure}[htbp!]
    \centering
    \includegraphics[width=0.99\linewidth]{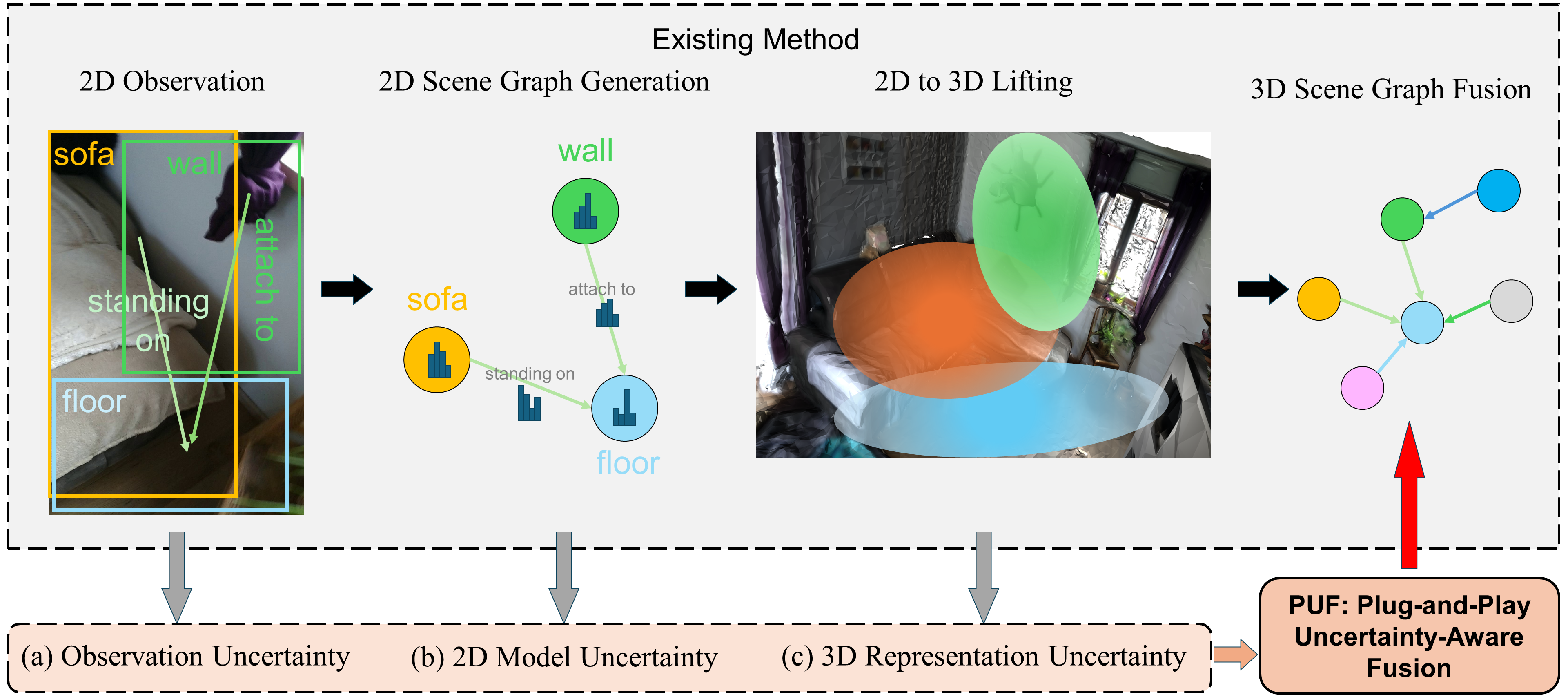}
    \caption{
        Our method is aware of 3 types of uncertainty which are overlooked in existing online 3D SGG methods. (a) Uncertainty from partial observation in 2D; (b) Uncertainty encoded in softmax output from 2D SGG model; (c) Uncertainty in 3D lifting, e.g. 3D Gaussian with estimated depth in FROSS~\cite{hou2025fross}. 
    }
    \label{fig:teaser}
    \vspace{-3mm} %
\end{figure}

Offline 3D SGG methods rely on 3D point cloud input and global message-passing over the reconstructed graph, making them accurate but incompatible with real-time operation~\cite{wu2021scenegraphfusion,feng20233d,koch2024sgrec3d,heo2025objectcentric}. Online methods~\cite{kim20193,wu2023incremental,gu2024conceptgraphs,feng2025hyperrectangle,hou2025fross,yeo2025statistical} process frames incrementally and achieve low latency, but treat the 2D-to-3D merging stage as a fully deterministic pipeline. This rigidity overlooks the sources of uncertainty above and therefore introduces three systematic limitations. 
\emph{(a)} A detection is either merged or rejected based on its argmax label, which prevents 2D model uncertainty from propagating into 3D.
\emph{(b)} Thresholding on raw 3D distance or spatial overlap treats back-projected 3D positions as exact, ignoring the inherent ambiguity in depth-based localization. 
\emph{(c)} Hard label transfer during merging does not redistribute evidence across candidate nodes, nor complete relational evidence for sparsely co-observed edges.

We propose \textbf{PUF}: a \emph{plug-and-play}, \emph{uncertainty-aware} and  \emph{training-free}  fusion framework that compensates for these failures simultaneously. Our framework wraps any 2D SGG model that outputs soft class and relation probability distributions, and propagates these distributions directly into our proposed merging process. 2D model uncertainty is hence preserved throughout 3D reconstruction. 
Scene graph node association is formulated as a probabilistic problem that jointly accounts for semantic uncertainty and 3D representation uncertainty. Each observation is matched against existing nodes through a likelihood, combining class-distribution similarity with a spatial overlap term, so that uncertain geometry attenuates rather than vetoes an association. 
Upon association, both node semantics and edge labels are updated through Dirichlet accumulation that redistributes evidence across all plausible candidates proportional to association likelihood. An optional relation prior compensates for poorly observed or structurally unobservable edges. 
PUF is representation-agnostic, and we instantiate it with both a 3D Gaussian and a 3D voxel backend to demonstrate 
its ability to generalize across different representations.
We evaluate PUF on the 3DSSG~\cite{wald2020learning,wu2021scenegraphfusion} and ReplicaSSG~\cite{straub2019replica,hou2025fross} benchmarks, where our method significantly and consistently outperforms the strongest baseline. 
Overall, our \textbf{main contributions} are as follows:

\begin{itemize}
    \item A plug-and-play, training-free fusion framework for online 3D scene graph generation that propagates label and representation uncertainty throughout the merging pipeline without additional training.
    
    \item A probabilistic node association that jointly models 2D scene graph uncertainty and 3D representation uncertainty, replacing hard association gates with a principled likelihood-based formulation.
    
    \item A Dirichlet evidence accumulation scheme for both node semantics and edge labels, augmented with a class-conditional prior that completes edges for sparsely and never co-observed object pairs.

    \item State-of-the-art results on the 3DSSG and ReplicaSSG benchmarks, improving relationship Recall@1 by 18.1 points over the strongest online baseline on 3DSSG, while maintaining real-time latency at 15\,ms per frame.
\end{itemize}

% ============================================================
\section{Related Work}
\label{sec:related}
% ============================================================

\noindent\textbf{Offline 3D Scene Graph Generation.}
Early 3D scene graph methods treat the task as a global inference problem over a complete 3D point cloud. 
Wald~\etal~\cite{wald2020learning} introduced the 3DSSG dataset and a graph convolutional network~\cite{kipf2017semi} baseline that encodes PointNet~\cite{qi2017pointnet} features extracted from fully reconstructed meshes. Subsequent works further refine node or edge representations, yet operate identically on a pre-built point cloud graph. 
% Wu~\etal~\cite{wu2021scenegraphfusion} adds a feature-wise attention mechanism for relationship inference but still requires a segmented, globally consistent point cloud as input. 
Wang~\etal~\cite{wang2023vl} improve long-tail relation prediction via multi-modal knowledge distillation at training time, and Heo~\etal~\cite{heo2025objectcentric} design a discriminative object feature encoder with contrastive pretraining. 
These methods produce dense and accurate segmentation at the point cloud level, but are incompatible with real-time operation. 

\noindent\textbf{Online 2D-to-3D Scene Understanding.}
A parallel line of work projects 2D predictions into 3D incrementally. General online lifting approaches demonstrate that rich 3D representations can be built without offline reconstruction. For example, ConceptGraphs~\cite{gu2024conceptgraphs} fuses 2D foundation-model detections into a 3D object graph via multi-view association. More recent open-vocabulary methods~\cite{yin2024sg,tang2025onlineanyseg} extend this to semantic segmentation masks, merging 2D outputs into persistent 3D structures using semantic similarity and spatial overlap. 

In online 3D scene graph generation specifically, early methods rely on sparse point cloud reconstruction with SLAM technology~\cite{tateno2015real,campos2021orb}. SGFN~\cite{wu2021scenegraphfusion} and its predecessor MonoSSG~\cite{wu2023incremental} infer local frame graphs from incremental segmentation and merge them deterministically into a global structure. Feng~\etal~\cite{feng2025hyperrectangle} introduce hyperrectangle embeddings to debias relation prediction. 
Kim~\etal~\cite{kim20193} build a sparse semantic graph from RGB-D streams without running reconstruction, but rely on hard association rules. More recently, FROSS~\cite{hou2025fross} pairs a real-time 2D SGG model with Gaussian back-projection for 3D node representation. This methodology achieves faster-than-real-time operation by removing the dependency on SLAM reconstruction, but the 2D-to-3D merging is fully deterministic.
SCRSSG~\cite{yeo2025statistical} pursue a post-hoc confidence rescoring strategy that re-weights predictions using node-to-node and node-to-edge co-occurrence statistics from a global graph. This method focuses on alleviating prediction imbalance, and does not address the three sources of uncertainty identified in this work.
All the above methods adopt deterministic online fusion or global message passing, which are not uncertainty-aware.

\noindent\textbf{Uncertainty-Aware 3D Perception and Scene Graphs. }
Bayesian filtering has a long history in sensor fusion. The Joint Probabilistic Data Association Filter~\cite{rasmussen2001probabilistic,bar2009probabilistic,rezatofighi2015joint} computes per-observation marginal association probabilities across all candidate tracks simultaneously, replacing hard data association with a principled probabilistic assignment. More recent research extends this method for joint landmark and pose estimation in SLAM~\cite{bowman2017probabilistic,strecke2019fusion}, and track-let prediction in multi-target tracking~\cite{chiu2021probabilistic,saleh2021probabilistic}. These techniques address spatial representation uncertainty in tracking but do not model semantic class distributions or graph-structured relationships.

In the 2D scene graph community, several works have recognized that relationship prediction is inherently uncertain. Yang~\etal~\cite{yang2021probabilistic} propose a plug-and-play Probabilistic Uncertainty Modeling module that represents each subject-object union region as a Gaussian distribution rather than a deterministic feature vector, enabling diverse and semantically ambiguous predictions. Similarly, Li~\etal~\cite{li2023uncertainty} adopt Bayesian classifier reparameterization to handle annotation noise in relationship labels. These approaches improve 2D prediction quality by preserving distribution-level information during model training, yet they address uncertainty only within the 2D inference stage. 

Our work bridges these two lines. Inspired by the literature on probabilistic data association, we replace hard association gates with a probabilistic node association formulation that jointly models semantic and spatial uncertainty. We then preserve the full softmax output of the 2D SGG model throughout the fusion process via Dirichlet evidence accumulation, propagating prediction uncertainty directly into the 3D scene graph rather than discarding it at the 2D-to-3D boundary.
We adopt voxel- and Gaussian-based 3D representations from state-of-the-art online methods~\cite{tang2025onlineanyseg,hou2025fross} and demonstrate that our PUF is effective on both, and is thus generalizable across 3D representations.

% ============================================================
\section{Method}
\label{sec:method}
% ============================================================

\subsection{Problem Definition}

We formulate online 3D scene graph generation as the incremental construction of a directed graph $\mathcal{G}=(\mathcal{V},\mathcal{E})$,
where each node $v_i\in\mathcal{V}$ represents a 3D object with semantic label $c_i\in\{1,\ldots,C\}$ and 3D position $\boldsymbol{\mu}_i$, and each directed edge $(i,j)\in\mathcal{E}$ carries a relationship label
$r_{ij}\in\{1,\ldots,R\}$.
Given a stream of RGB-D frames
$\{(\mathbf{I}_t,\mathbf{D}_t,\mathbf{T}_t)\}_{t=1}^{T}$, where
$\mathbf{T}_t$ is the known camera pose, the goal is to build
$\mathcal{G}$ incrementally without retaining the full video history.

% ============================================================
\subsection{Framework Overview}
% ============================================================
\vspace{-3mm}
\begin{figure}[htbp!]
    \centering
    \includegraphics[width=\linewidth]{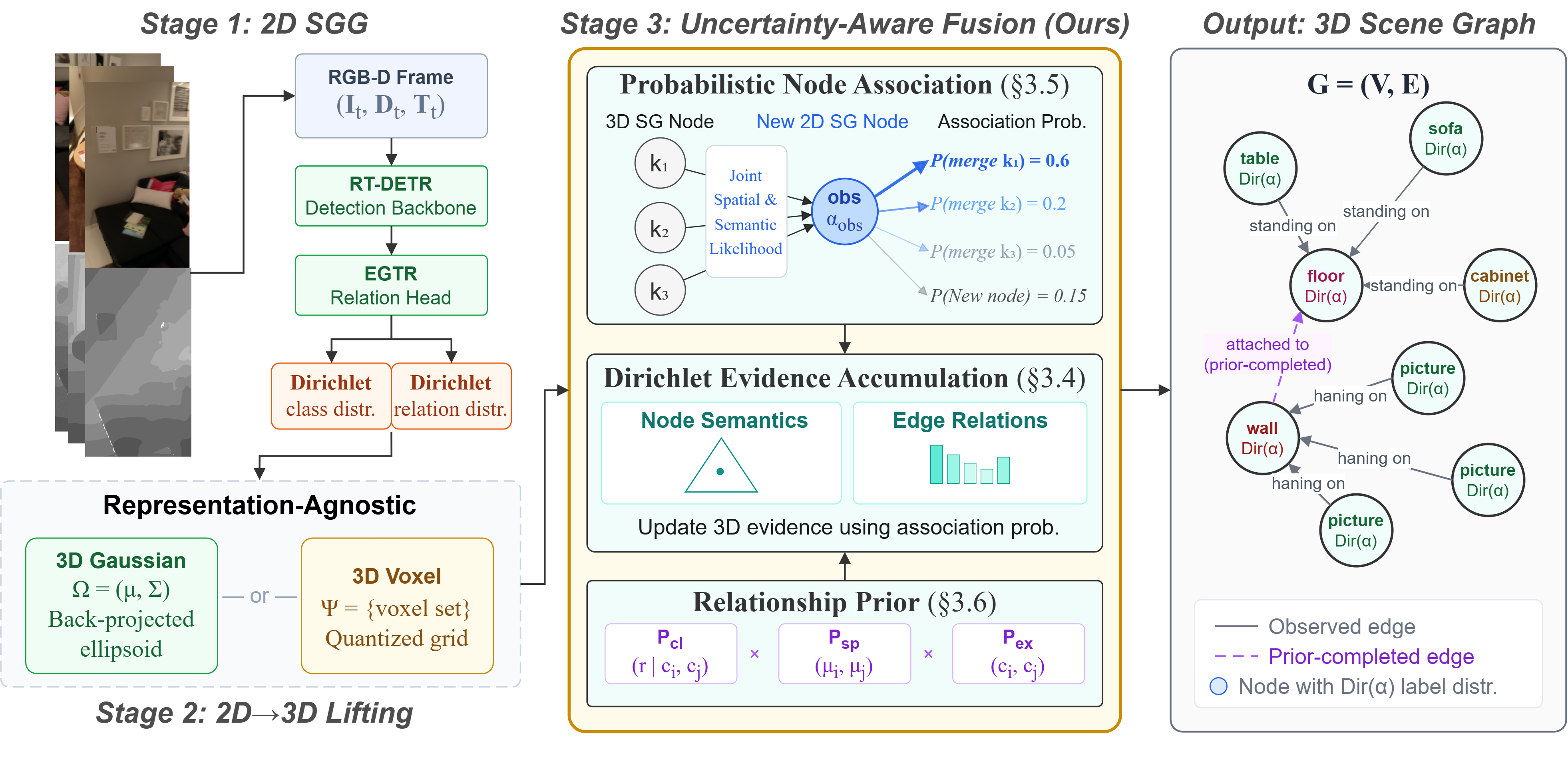}
    \caption{
        Overview of our proposed PUF. \textbf{Stage~1:} An RGB-D frame is processed by a 2D SGG model. We use Dirichlet-parameterization for object and relation label distributions.
        \textbf{Stage~2:} Each detection is
        lifted to 3D via a Gaussian or voxel backend (representation-agnostic). 
        \textbf{Stage~3:} Our uncertainty-aware fusion stage performs probabilistic node association and Dirichlet evidence accumulation, with an optional relationship prior for sparsely observed edges.
        \textbf{Output} 3D scene graph has nodes and edges carry full Dirichlet distributions over their respective label spaces.%
    }
    \label{fig:framework}
    \vspace{-3mm} %-3
\end{figure}

Figure~\ref{fig:framework} illustrates our uncertainty-aware fusion framework. At each frame, a 2D SGG model produces soft class and relation distributions, which are lifted to 3D observation nodes via either a Gaussian or voxel backend.
We use the Dirichlet distribution to accumulate object and relationship evidence on the global 3D SG (Sec.~\ref{subsec:dirichlet}).
Probabilistic node association (Sec.~\ref{subsec:association}) computes a joint association likelihood between a new observation and all existing global nodes, which replaces hard binary gates, and redistributes 2D evidence to 3D accumulators.
An optional relationship prior (Sec.~\ref{subsec:prior}) completes sparsely observed edges.

Our framework is \textit{plug-and-play}: it wraps any 2D SGG model that outputs soft class and relation distributions without modifying model architecture; and is \textit{training-free}: it requires no additional neural-network training. 

% ============================================================
\subsection{Preliminaries}
% ============================================================

\noindent\textbf{2D scene graph model.}
We use RT-DETR detection backbone~\cite{carion2020end,lv2024rt} with EGTR~\cite{im2024egtr} for 2D SGG, following FROSS~\cite{hou2025fross}. RT-DETR produces object bounding boxes and feature representations at real-time latency. EGTR predicts pairwise relationship logits via an edge-conditioned transformer.

PUF departs from existing hard-fusion methods by consuming the \textit{full} softmax outputs: the per-object class probability $\hat{p}^c\in\Delta^{C-1}$ (the ($C$-1)-simplex) and the per-pair relation probability $\hat{p}^r\in\Delta^{R-1}$, rather than only their argmax. In this way, the prediction uncertainty is propagated into the fusion layer.

\noindent\textbf{3D representation — Gaussian.}
Each detected bounding box is back projected to a 3D Gaussian $\Omega_i=(\boldsymbol{\mu}_i,\boldsymbol{\Sigma}_i)$ using per-pixel depth and the camera intrinsics/extrinsics.
The mean $\boldsymbol{\mu}_i$ is the depth-weighted centroid of the back-projected pixel cloud inside the box; the covariance $\boldsymbol{\Sigma}_i$ is the empirical covariance of the cloud.
Implementation is identical to FROSS~\cite{hou2025fross} for fair comparison; the details are provided in Supplementary Sec.~B.

\noindent\textbf{3D representation — Voxel.}
Each bounding box is converted to a discrete voxel set, inspired by OnlineAnySeg~\cite{tang2025onlineanyseg}. Depth-filtered pixels are first back-projected to the world coordinates and then quantized to a uniform grid of resolution $\delta$.
Depth filtering retains only pixels whose depth deviates by at most $\varepsilon_d$ from the box-centre depth, isolating foreground from background clutter. The full lifting and merging procedure is detailed in Supplementary Sec.~B.

% ============================================================
\subsection{Dirichlet Representation for Nodes and Edges}
\label{subsec:dirichlet}
% ============================================================

We use the Dirichlet distribution to represent the semantic labels of SG nodes and edges. It naturally replaces one-hot based label with a distribution over all categories, therefore carries 2D SGG model uncertainty. Its posterior update is straightforward and facilitates the evidence accumulation in the merging step. 

\noindent\textbf{Node class model.}
The class probability $\hat{p}^c_k$ of global 3D node $\mathcal{V}_k$ is modeled by a Dirichlet:
\begin{equation}
\hat{p}^c_k\sim\operatorname{Dir}(\boldsymbol{\alpha}_{k}).
\label{eq:node_model}
\end{equation}
where $\boldsymbol{\alpha}_k \in \mathbb{R}^C$ is the evidence accumulator for $\mathcal{V}_k$ over the semantic class label space, and interpreted as the concentration parameters of the Dirichlet distribution. Note that $\boldsymbol{\alpha}_k$ is real-valued and not normalized to facilitate association update (Eq.~\eqref{eq:dirichlet_update}).
The normalized Dirichlet posterior mean $\bar{\boldsymbol{\alpha}}_k = \boldsymbol{\alpha}_k / \|\boldsymbol{\alpha}_k\|_1$ gives a proper distribution over class labels, and the hard class prediction for the final output is $c_k = \arg\max\,\bar{\boldsymbol{\alpha}}_k$.

\noindent\textbf{Edge relation model.}
Similarly to nodes, for each directed edge $\mathcal{E}_{ij}$, we model the relationship class probability $\hat{p}^r_{ij}$ as:
\begin{equation}
\hat{p}^r_{ij}\sim\operatorname{Dir}(\boldsymbol{\phi}_{ij}).
\label{eq:rel_model}
\end{equation}
where $\boldsymbol{\phi}_{ij}\in \mathbb{R}^R$ is the accumulator for edge $\mathcal{E}_{ij}$. 
The final predicted relationship is the posterior mean argmax: $r_{ij}=\arg\max\,\bar{\boldsymbol{\phi}}_{ij}$.

% ============================================================
\subsection{Probabilistic Node Association}
\label{subsec:association}
% ============================================================

\noindent\textbf{Association likelihood.}
Between each pair of newly observed node $obs$ and an existing node $k$ in global 3D SG, we compute a factored likelihood:
\begin{equation}
L[obs,\,k] = L_{\text{sp}}(obs,\,k)\;\cdot\;L_{\text{se}}(obs,\,k).
\label{eq:likelihood}
\end{equation}
The spatial factor, unlike hard thresholds in existing methods, enters the likelihood as a continuous term that modulates association probabilities, and preserves as well as propagates 3D representation uncertainty into fusion. Its specific form depends on the representation:
\begin{equation}
L_{\text{sp}}(obs,\,k) = \begin{cases}
  BC\!\left(\Omega_{obs},\Omega_k\right)
    & \text{(Gaussian)},  \\[4pt]
  |\Psi_{obs}\cap\Psi_k|\;/\;|\Psi_{obs}|
    & \text{(Voxel)}.
\end{cases}
\label{eq:spatial}
\end{equation}
For the Gaussian backend, BC denotes the Bhattacharyya coefficient and measures spatial overlap between two Gaussians, and $BC = 1-H^2$ where $H$ is the Hellinger distance \cite{murrugarra2024probabilistic}.
For the voxel representation, we use the containment score as defined in Eq.~\eqref{eq:spatial} to measure spatial overlap between an observation with voxel set $\Psi_{obs}$ and a global node with accumulated set $\Psi_k$.

The semantic factor compares the Dirichlet posterior means of the observation and the global node via Jensen–Shannon divergence (JSD):
\begin{equation}
L_{\text{se}}(obs,\,k)
= \exp\!\left(-\,\frac{
    JSD\!\left(\hat{p}_{obs}^c,\,
                \bar{\boldsymbol{\alpha}}_k\right)}
  {\sigma_{se}}\right).
\label{eq:semantic}
\end{equation}
The bandwidth $\sigma_{se}$ controls semantic selectivity. A larger value increases the semantic factor and encourages more divergent $\left(\hat{p}_{obs}^c,\,\bar{\boldsymbol{\alpha}}_k\right)$ pairs to merge.

\noindent\textbf{Probabilistic marginals of node association.}
Inspired by the Joint Probabilistic Data Association Filter~\cite{bar2009probabilistic,rezatofighi2015joint}, we model the probability that the observation associates with global node $k$ as $\beta[obs,\,k]$. $\beta[obs,\,k]$ is obtained by marginalizing the likelihood vector $\{L[obs,\,k]\}_{k\in\mathcal{K}}$ together with a prior probability that $obs$ is a new object:
\begin{equation}
\beta[obs,\,k] = \frac{L[obs,\,k]}{Z},
\quad
\beta_{birth} = \frac{\lambda_{birth}}{Z},
\quad
Z = \lambda_{birth} + \textstyle\sum_{k\in\mathcal{K}} L[obs,\,k].
\label{eq:jpda}
\end{equation}
where $\lambda_{birth}$ is a birth term encoding the prior probability of observing a new object.
By construction $\beta_{birth}+\sum_k\beta[obs,\,k]=1$.
If $\beta_{birth}>0.5$, the observation spawns a new global node; otherwise it associates with existing nodes.
To ensure real-time computation we adopt per-observation independent normalization Eq.~\eqref{eq:jpda} rather than the expensive joint hypothesis enumeration as in the original Joint Probabilistic Data Association Filter~\cite{bar2009probabilistic}. This is also due to the fact that DETR's non-maximum suppression mechanism~\cite{lv2024rt} already prevents two detections of the same object within a single frame.
We provide a detailed analysis in Supplementary Sec.~A.

\noindent\textbf{Association update.}
If the new observation merges to the global 3D SG ($\beta_{birth}\leq0.5$), we apply semantic and spatial updates as follows:
\begin{alignat}{2}
&\text{Semantic:}\quad
  &\boldsymbol{\alpha}_k &\;\leftarrow\;
    \boldsymbol{\alpha}_k + \beta[obs,\,k] \cdot \hat{p}_{obs}^c,
    \quad\forall\,k:\beta[obs,\,k]\geq\beta_{min},
    \label{eq:dirichlet_update}\\[4pt]
&\text{Spatial:}\quad
  &k^* &= \operatorname*{arg\,max}_k\;\beta[obs,\,k].
    \label{eq:argmax}
\end{alignat}
where $\beta_{min}$ is an evidence cut-off threshold. Class evidence across all candidates with non-negligible $\beta[obs,\,k]$ is distributed to the global node $k$, which lets node $k$'s Dirichlet accumulator absorb uncertainty across near-tie associations.
3D representation (Gaussian mean, covariance/voxel union) is updated only for $k^*$ to avoid contaminating or drifting the positions of non-target nodes: a predicted label can be 0.6 chair - 0.4 table, but the object is not part chair, part table, so semantics are softly merged while spatial occupation is not. 

A soft relationship transfer is applied to the SG edges. 
Let $\hat{p}_{obs,j}^r$ denote the 2D relation prediction between the observation node and global node $j$ in the current frame.
The relationship evidence is redistributed to the same candidates and with the same weights as the semantic update:
\begin{equation}
\boldsymbol{\phi}_{kj}
  \;\leftarrow\; \boldsymbol{\phi}_{kj}
    + \beta[obs,\,k]\cdot\hat{p}_{obs,j}^r,
\quad
\boldsymbol{\phi}_{ik}
  \;\leftarrow\; \boldsymbol{\phi}_{ik}
    + \beta[obs,\,k]\cdot\hat{p}_{i,obs}^r,
\label{eq:rel_transfer}
\end{equation}
where the first term covers the observation acting as subject and the second as object. This update preserves consistency between the class and relation accumulators, giving evidence accumulation to poorly observed nodes and edges.
After $obs$ is merged to existing nodes, it is invalidated. 

Otherwise, if $\beta_{birth}>0.5$, then node $obs$ is a newly observed object and initializes a new 3D node from the soft 2D outputs:
\begin{gather}
\boldsymbol{\alpha}_{obs} \leftarrow \hat{p}_{obs}^c . \\
\boldsymbol{\phi}_{i,obs} \leftarrow \hat{p}_{i,obs}^r\,, \quad
\boldsymbol{\phi}_{obs,j} \leftarrow \hat{p}_{obs,j}^r\,.
\label{eq:node_init}
\end{gather}
Then $obs$ is appended to the list of global 3D nodes.

% ============================================================
\subsection{Relationship Prior for Edge Evidence Enhancement}
\label{subsec:prior}
% ============================================================

An edge in the SG is only well-observed if both of its associated nodes are well-observed, making edges more susceptible to observation noise. To address this, we leverage Dirichlet conjugacy and propose an informative prior. For a pair with object classes $c_i,c_j$ and 3D centroids $\boldsymbol{\mu}_i,\boldsymbol{\mu}_j$, the prior factorizes as:
\begin{equation}
\boldsymbol{\phi}^{prior}_{ij}
= P_{cl}(r\mid c_i,c_j)
  \cdot P_{sp}(\boldsymbol{\mu}_i,\boldsymbol{\mu}_j)
  \cdot P_{ex}(c_i,c_j).
\label{eq:prior}
\end{equation}
\emph{Class-conditional prior $P_{cl}(r\mid c_i,c_j)$} is precomputed from training-set annotation counts with Laplace smoothing:
\begin{equation}
P_{cl}(r\mid c_i,c_j)
= \frac{n_{ij}^r+\varepsilon}{\sum_{r'}n_{ij}^{r'}+R\varepsilon},
\label{eq:pclass}
\end{equation}
where $n_{ij}^r$ counts relation $r$ between object-class pair $(c_i,c_j)$ in training data, and Laplace smoothing $\varepsilon=0.1$. This requires a single offline pass over annotations without gradient-based training.

\emph{Spatial prior $P_{sp}(\boldsymbol{\mu}_i,\boldsymbol{\mu}_j)$} is computed online from 3D positions, decaying the prior as spatial distance increases: $P_{sp}(\boldsymbol{\mu}_i,\boldsymbol{\mu}_j) = e^{-\frac{\mid\boldsymbol{\mu}_i -\boldsymbol{\mu}_j\mid}{2}}$. The combination of the class and spatial priors is normalized before multiplying with $P_{ex}(c_i,c_j)$.

\emph{Existence gate $P_{ex}(c_i,c_j)$} is the fraction of training-set pairs of class $(c_i,c_j)$ that carry any annotated relation. It scales the prior down for class pairs that rarely interact, suppressing false-positive completions for rarely co-occurring class pairs.

For observed edges, the final distribution is the Bayesian posterior:
\begin{equation}
\hat{p}^r_{ij}\sim\operatorname{Dir}(\boldsymbol{\phi}_{ij}^{post.}), \qquad
\boldsymbol{\phi}_{ij}^{post.}
= \boldsymbol{\phi}_{ij}^{prior}
  + \boldsymbol{\phi}_{ij},
\label{eq:posterior}
\end{equation}
where $\boldsymbol{\phi}_{ij}$ is the accumulated evidence by Eq.~\eqref{eq:rel_transfer}.  For unobserved edges, the prior alone is used if $\max(\boldsymbol{\phi}_{ij}^{prior})>0.5$. In the case no prior is used, the relationship label is predicted from Eq.\eqref{eq:rel_model}.

% ============================================================
\section{Experiments}
\label{sec:experiments}
% ============================================================

% ============================================================
\subsection{Experimental Setup}
% ============================================================

\subsubsection{Datasets.}

The \emph{3DSSG} dataset~\cite{wald2020learning} extends 3RScan~\cite{wald2019rio}, comprising 1,482 RGB-D indoor scans with dense instance segmentation and directed inter-object relationship annotations. We follow the standard protocol of~\cite{wald2020learning,wu2021scenegraphfusion,wu2023incremental,hou2025fross}, mapping 160 object categories to the 20 NYUv2 classes~\cite{silberman2012indoor} and retaining the 7 most frequent predicate types. After mapping, the dataset contains 21{,}974 annotated object instances and 16{,}324 inter-object relationships. We use the official training, validation, and test splits throughout.

\noindent\emph{ReplicaSSG}~\cite{hou2025fross} extends the Replica dataset~\cite{straub2019replica} with inter-object relationship annotations across 18 photorealistic indoor scenes. Its label space is adopted from Visual Genome~\cite{krishna2017visual} and contains 33 object and 9 relationship categories, facilitating zero-shot transfer from pretrained 2D SG models. Of the 18 scenes, 7 serve as a validation set and 11 as the test set; there is no training split.

\subsubsection{Evaluation Metrics.}

We adopt the recall-based protocol of~\cite{wu2023incremental,hou2025fross}.
\emph{Object recall} measures the fraction of ground-truth instances matched to a correctly classified node. 
\emph{Predicate recall} measures correct relationship classification among pairs whose endpoints are both detected.
\emph{Relationship recall} requires both endpoints to be correctly detected \emph{and} the predicate correctly classified. 
We additionally report \emph{mean recall} (mRecall), which averages per-class recall to mitigate the severe predicate class imbalance present in both datasets~\cite{wu2021scenegraphfusion,wu2023incremental,hou2025fross}.

\subsubsection{Implementation Details. }

We use EGTR~\cite{im2024egtr} with RT-DETRv2-M~\cite{lv2024rt} as the \emph{2D scene graph model}, following FROSS~\cite{hou2025fross} for fair comparison. For 3DSSG experiments, the model is trained on 2D scene graphs extracted from the 3DSSG training split. For ReplicaSSG experiments, we use a model trained on Visual Genome~\cite{krishna2017visual} with no additional fine-tuning on Replica data.
At inference, object detections below a confidence threshold of 0.7 are discarded and the top-10 pairwise relation scores per frame are retained, matching the FROSS configuration. All models are evaluated on an Intel XEON Gold 6534 processor with a single CPU core, and an NVIDIA L40S GPU.

\noindent\emph{3D representations.}
For the Gaussian backend, the back-projection and covariance estimation follow FROSS~\cite{hou2025fross} without modification.
For the voxel backend, we quantize depth-filtered back-projected points to a grid of $\delta=2$\,cm and retain only pixels whose depth deviates by at most $\varepsilon_d=0.3$\,m from the bounding box center depth for foreground isolation.

\noindent\emph{Hyperparameters.}
For Probabilistic Node Association and on both 3DSSG and ReplicaSSG, the semantic bandwidth is set to $\sigma_{se}=0.3$; the birth density is $\lambda_{birth}=0.4$; the minimum association weight for soft updates is $\beta_{min}=0.05$. 
For the Relationship Prior, the class-conditional prior $P_{cl}$ is precomputed offline from the 3DSSG training split, and no prior is used for ReplicaSSG since it does not have a training set. 
All hyperparameters are selected by grid search on the respective validation splits with relationship recall as the objective. Detailed analysis is provided in Section~\ref{subsec:ablation} for the 3D Gaussian representation and in Supplementary Sec.~D.2 for the voxel.

% ============================================================
\subsection{Quantitative Results and Comparison}
% ============================================================

Tables~\ref{table:main_3dssg} and~\ref{table:main_replica} present performance and latency comparisons on the 3DSSG and ReplicaSSG benchmarks, respectively. On 3DSSG, our PUF-Gaussian variant achieves the highest relationship recall of 46.0\%, improving over FROSS by 18.1 points while operating at 15\,ms per frame. Compared to the SGG recall gain, the additional fusion latency is negligible, and our method remains faster than all RGB-D+SLAM baselines by an order of magnitude. This further substantiates the advantages of lifting scene graphs from 2D images over direct point cloud reasoning. 
On ReplicaSSG, where no relationship prior is used due to the absence of a training split, PUF-Gaussian variant consistently outperforms FROSS across all metrics. This demonstrates that our proposed fusion framework itself provides reliable gains, independently of the prior. This is further supported by our ablation studies Tab.~\ref{table:ablation_3dssg} and ~\ref{table:ablation_replica} in Sec.~\ref{subsec:ablation}. Our Voxel variant further demonstrates that the improvements are representation-agnostic: consistent gains over FROSS are observed with both backends across both benchmarks, confirming that the framework generalizes beyond any single 3D representation choice.

\begin{table}[!htbp]
    \centering
    \caption{
        Performance comparison of 3D SGG methods using different input modality on the 3DSSG test set.  The end-to-end latency for baseline methods are obtained without the environmental mapping step. The best and second-best results are highlighted in \textbf{bold}, and \underline{underline}, respectively. Latencies are reported in milliseconds.
    }
    \small
    \resizebox{0.9\textwidth}{!}{
        \Large
        \begin{tabular}{l|ccc|cc|c|c}
        \toprule
         & \multicolumn{3}{c|}{\textbf{Recall (\%)}$\uparrow$} & \multicolumn{2}{c|}{\textbf{mRecall (\%)}$\uparrow$} & & \multirow{2}{*}{\shortstack{\textbf{Input}\\\textbf{Modality}}} \\
        \textbf{Method} & \textbf{Rel.} & \textbf{Obj.} & \textbf{Pred.} & \textbf{Obj.} & \textbf{Pred.} & \textbf{Latency}$\downarrow$ & \\
        \hline
        3DSSG~\cite{wald2020learning}       & 12.9 & 37.4 & 22.0 & 26.2 & 14.4 & -   & Point Cloud \\
        VL-SAT~\cite{wang2023vl}      & 23.5 & 53.7 & 28.9 & 42.1 & 25.3 & -   & Point Cloud \\
        OCRL~\cite{heo2025objectcentric}        & 25.2 & 58.5 & 30.1 & 49.6 & 27.1 & -   & Point Cloud \\
        SGFN~\cite{wu2021scenegraphfusion}        & 22.0 & 51.6 & 27.5 & 37.7 & 24.0 & 245 & RGB-D+SLAM \\
        MonoSSG~\cite{wu2023incremental}     & 23.3 & 53.8 & 28.4 & 43.8 & 26.6 & 283 & RGB-D+SLAM \\
        SCRSSG~\cite{yeo2025statistical}      & 25.7 & 61.1 & 27.6 & 60.5 & \underline{27.8} & 350 & RGB-D+SLAM \\
        VGfM~\cite{gay2018visual}        & 19.6 & 50.0 & 20.4 & 34.8 & 11.0 & 379 & RGB \\
        IMP~\cite{xu2017scene}         & 19.7 & 49.5 & 20.9 & 34.7 & 13.8 & -   & RGB-D \\
        Kim \etal~\cite{kim20193}   &  9.1 & 59.0 &  7.1 & 51.0 &  8.0 & 454 & RGB-D \\
        FROSS~\cite{hou2025fross}       & 27.9 & 62.5 & 33.2 & 63.8 & 18.1 & \textbf{13}   & RGB-D \\
        \midrule
        PUF-Voxel    & \underline{40.3} & \underline{65.5} & \underline{46.1} & \underline{64.1} & 21.8 & 31 & RGB-D \\
        PUF-Gaussian & \textbf{46.0} & \textbf{69.7} & \textbf{51.4} & \textbf{65.8} & \textbf{28.2} & \underline{15} & RGB-D \\
        \bottomrule
        \end{tabular}
    }
    \label{table:main_3dssg}
\end{table}

\begin{table}[!htbp]
    \centering
    \caption{
        Performance comparison of 3D SGG methods on the ReplicaSSG test set and the end-to-end latency. The best and second-best results are highlighted in \textbf{bold}, and \underline{underline}, respectively. Latencies are reported in milliseconds.
    }
    \resizebox{0.7\textwidth}{!}{
        \Large
        \begin{tabular}{l|ccc|cc|c}
        \toprule
         & \multicolumn{3}{c|}{\textbf{Recall (\%)}$\uparrow$} & \multicolumn{2}{c|}{\textbf{mRecall (\%)}$\uparrow$} & \multirow{2}{*}{\textbf{Latency}$\downarrow$} \\
        \textbf{Method} & \textbf{Rel.} & \textbf{Obj.} & \textbf{Pred.} & \textbf{Obj.} & \textbf{Pred.} & \\
        \hline
        FROSS~\cite{hou2025fross}  & \underline{22.5} & 26.2          & \underline{28.0} & 29.1          & \underline{20.6} & \textbf{14} \\
        PUF-Voxel    & 22.4             & \underline{27.9} & 26.9          & \underline{30.2} & 17.9          & 30 \\
        PUF-Gaussian & \textbf{25.3}    & \textbf{31.0} & \textbf{35.6}    & \textbf{33.7} & \textbf{26.2}    & \underline{16} \\
        \bottomrule
        \end{tabular}
    }
    \label{table:main_replica}
\end{table}

% ============================================================
\subsection{Ablation Study}
\label{subsec:ablation}
% ============================================================

\subsubsection{Module Effectiveness.}

Tables~\ref{table:ablation_3dssg} and~\ref{table:ablation_replica} report ablation results on 3DSSG and ReplicaSSG, respectively. We ablate three components: Dirichlet node representation, Dirichlet edge representation, and the relationship prior, across both the Gaussian and voxel backends. Probabilistic association is only enabled if the Dirichlet node representation is used.

\begin{table}[!htbp]
    \centering
    \caption{
        Ablation study of model components with recall and mean recall metrics on 3DSSG test set. Components Node and Edge means using Dirichlet representation for SG  nodes and edges, respectively. Latencies are reported in milliseconds.
    }
    \small
    \resizebox{\textwidth}{!}{
        \Large
        \begin{tabular}{l|ccc|ccc|cc|c|ccc|cc|c}
        \toprule
        \multirow{3}{*}{\textbf{Method}} & \multicolumn{3}{c|}{\textbf{Component}} & \multicolumn{6}{c|}{\textbf{Gaussian}} & \multicolumn{6}{c}{\textbf{Voxel}} \\
        & & & & \multicolumn{3}{c}{Recall@1$\uparrow$} & \multicolumn{2}{c|}{mRecall@1$\uparrow$} & \multirow{2}{*}{Latency$\downarrow$} & \multicolumn{3}{c}{Recall@1$\uparrow$} & \multicolumn{2}{c|}{mRecall@1$\uparrow$} & \multirow{2}{*}{Latency$\downarrow$} \\
        & Node & Edge & Prior & Rel. & Obj. & Pred. & Obj. & Pred. & & Rel. & Obj. & Pred. & Obj. & Pred. & \\
        \hline
        FROSS & & & & 27.9 & 62.4 & 33.2 & 63.8 & 18.1 & \textbf{13.2} & 23.7 & 61.6 & 29.5 & 62.6 & 16.5 & \textbf{28.9} \\
        \midrule
        \multirow{5}{*}{PUF}
        & \ding{51} & & & 31.3 & \underline{69.0} & 36.6 & \underline{65.8} & 20.9 & 14.6 & 28.6 & \underline{64.9} & 33.4 & \underline{64.1} & 17.8 & 31.0 \\
        & \ding{51} & \ding{51} & & 33.9 & \underline{69.0} & 39.2 & \underline{65.8} & 22.4 & 14.7 & 31.3 & \underline{64.9} & 35.2 & \underline{64.1} & 18.3 & 31.3 \\
        & & \ding{51} & & 27.9 & 62.9 & 33.7 & 64.3 & 21.9 & \underline{13.5} & 24.2 & 61.8 & 30.6 & 62.8 & \underline{18.0} & \underline{29.2} \\
        & & \ding{51} & \ding{51} & \underline{41.4} & 62.9 & \underline{47.8} & 64.3 & \underline{27.4} & \underline{13.5} & \underline{37.2} & 61.8 & \underline{40.0} & 62.8 & 19.6 & \underline{29.2} \\
        & \ding{51} & \ding{51} & \ding{51} & \textbf{46.0} & \textbf{69.7} & \textbf{51.5} & \textbf{65.8} & \textbf{28.2} & 14.8 & \textbf{40.3} & \textbf{65.5} & \textbf{46.1} & \textbf{64.1} & \textbf{21.8} & 31.3 \\
        \bottomrule
        \end{tabular}
    }
    \label{table:ablation_3dssg}
    \vspace{-2mm}
\end{table}

On 3DSSG, adding the Dirichlet node representation alone improves relationship recall by 3.4 points over FROSS under the Gaussian backend, confirming that propagating 2D class uncertainty into the fusion layer benefits node association and semantic accumulation. Incorporating Dirichlet edge representation and enabling probabilistic association further improves relationship recall to 33.9\%, as soft relational evidence is redistributed across plausible candidates rather than collapsed to hard votes. On top of the probabilistic fusion, the relationship prior additionally contributes a large gain: enabling it alongside the edge Dirichlet representation lifts relationship recall to 41.4\%, as it completes evidence for sparsely and never co-observed pairs. The full model combining all three components achieves 46.0\% relationship recall. Consistent trends are observed with the voxel backend, confirming that each component's contribution is representation-agnostic.

\begin{table}[!htbp]
    \centering
    \caption{
        Ablation study of model components with recall and mean recall metrics on ReplicaSSG test set. Components Node and Edge means using Dirichlet representation for SG  nodes and edges, respectively. Latencies are reported in milliseconds.
    }
    \resizebox{\textwidth}{!}{
        \Large
        \begin{tabular}{l|cc|ccc|cc|c|ccc|cc|c}
        \toprule
        \multirow{3}{*}{\textbf{Method}} & \multicolumn{2}{c|}{\textbf{Component}} & \multicolumn{6}{c|}{\textbf{Gaussian}} & \multicolumn{6}{c}{\textbf{Voxel}} \\
        & & & \multicolumn{3}{c}{Recall@1$\uparrow$} & \multicolumn{2}{c|}{mRecall@1$\uparrow$} & \multirow{2}{*}{Latency$\downarrow$} & \multicolumn{3}{c}{Recall@1$\uparrow$} & \multicolumn{2}{c|}{mRecall@1$\uparrow$} & \multirow{2}{*}{Latency$\downarrow$} \\
        & Node & Edge & Rel. & Obj. & Pred. & Obj. & Pred. & & Rel. & Obj. & Pred. & Obj. & Pred. & \\
        \hline
        FROSS & & & 22.5 & 26.2 & 28.0 & 29.1 & 20.6 & \textbf{13.8} & 18.6 & 21.3 & 21.1 & 23.4 & 13.5 & \textbf{28.8} \\
        \midrule
        \multirow{3}{*}{PUF}
        & \ding{51} & & \underline{23.1} & \textbf{31.0} & \underline{30.5} & \textbf{33.7} & 22.5 & 15.4 & \underline{21.7} & \textbf{27.9} & \underline{24.5} & \textbf{30.2} & 16.8 & 29.5 \\
        & & \ding{51} & 23.0 & 26.1 & 29.7 & 28.8 & \underline{23.7} & \underline{15.2} & 20.8 & 25.4 & 24.3 & 26.8 & \underline{16.9} & \underline{29.3} \\
        & \ding{51} & \ding{51} & \textbf{25.3} & \textbf{31.0} & \textbf{35.6} & \textbf{33.7} & \textbf{26.2} & 15.6 & \textbf{22.4} & \textbf{27.9} & \textbf{26.9} & \textbf{30.2} & \textbf{17.9} & 29.7 \\
        \bottomrule
        \end{tabular}
    }
    \label{table:ablation_replica}
    \vspace{-2mm}
\end{table}

On ReplicaSSG, where no relationship prior is used, the node and edge Dirichlet components each independently improve over FROSS, and combining them with the probabilistic association yields the best performance of 25.3\% relationship recall under the Gaussian backend. This consistent improvement in the absence of any prior corroborates that the core uncertainty-aware fusion framework is effective on its own, and that the gains observed on 3DSSG are not solely attributable to the prior. 

\subsubsection{Hyperparameter choices.}

Table~\ref{tab:hyperparams} reports the sensitivity of the two main hyperparameters on the 3DSSG (w/o prior) and ReplicaSSG validation sets under the Gaussian backend. The semantic bandwidth $\sigma_{\text{se}}$ controls the selectivity of the semantic likelihood factor (Eq.~\eqref{eq:semantic}): values below 0.3 over-penalize distributional differences and fragment semantically similar nodes, while values above 0.3 over-merge distinct objects, degrading both object and relationship recall. 
The birth density $\lambda_{\text{birth}}$ governs the prior probability of spawning a new global node (Eq.~\eqref{eq:jpda}). A large $\lambda_{\text{birth}}$ under-merges the graph, leaving object recall roughly intact, but depressing predicate and relationship recall as relational evidence is scattered across duplicate nodes. A small $\lambda_{\text{birth}}$ over-merges distinct objects, harming all metrics. Notably, the same setting $(\sigma_{\text{se}}{=}0.3,\;\lambda_{\text{birth}}{=}0.4)$ is optimal on both datasets, 
indicating that the framework is not sensitive to precise tuning. Voxel-backend hyperparameter analysis is provided in Supplementary Sec.~D.2 and exhibits consistent trends.

\begin{table}[htbp]
  \centering
  \caption{
  Performance comparison on (w/o prior) 3DSSG/ReplicaSSG validation sets with different semantic bandwidth $\sigma_{se}$ and birth density $\lambda_{birth}$. 
  }
  \resizebox{0.9\textwidth}{!}{
    \begin{tabular}{c|l|lllll|lllll}
    \toprule
    \multirow{2}[2]{*}{Dataset} & \multicolumn{1}{c|}{\multirow{2}[2]{*}{Recall(\%)$\uparrow$}} & \multicolumn{5}{c|}{$\sigma_{se}$} & \multicolumn{5}{c}{$\lambda_{birth}$} \\
          &       & \multicolumn{1}{c}{0.2} & \multicolumn{1}{c}{0.25} & \multicolumn{1}{c}{0.3} & \multicolumn{1}{c}{0.35} & \multicolumn{1}{c|}{0.4} & \multicolumn{1}{c}{0.3} & \multicolumn{1}{c}{0.35} & \multicolumn{1}{c}{0.4} & \multicolumn{1}{c}{0.45} & \multicolumn{1}{c}{0.5} \\
    \midrule
    \multirow{3}[2]{*}{3DSSG} & Rel.  & 32.4  & \underline{33.7}  & \textbf{34.2}  & 31.0  & 29.3  & 30.9  & 31.7  & \textbf{34.2}  & \textbf{34.2}  & \textbf{34.2} \\
          & Obj.  & 65.1  & \underline{66.5}  & \textbf{66.9}  & 65.8  & 64.0  & 65.3  & 66.0  & 66.9  & \underline{67.7}  & \textbf{68.1} \\
          & Pred. & 36.6  & \underline{37.4}  & \textbf{38.2}  & 35.7  & 34.6  & 34.8  & 35.6  & \textbf{38.2}  & \underline{37.8}  & 36.5 \\
    \midrule
    \multirow{3}[2]{*}{ReplicaSSG} & Rel.  & 16.1  & \underline{16.3}  & \textbf{17.6}  & 14.7  & 13.6   & 12.8   & 13.9  & \textbf{17.6}  & \underline{14.6}  & \underline{14.6} \\
          & Obj.  & 23.3  & \underline{24.5}  & \textbf{24.5}  & 23.6  & 22.2  & 21.3  & 21.7  & \textbf{24.5}  & \underline{23.0}  & \underline{23.0} \\
          & Pred. & 20.8  & \underline{22.0}  & \textbf{22.4}  & 19.9  & 18.3  & 19.0  & 19.4  & \textbf{22.4}  & \underline{20.3}  & 19.1 \\
    \bottomrule
    \end{tabular}%
    }
  \label{tab:hyperparams}%
\end{table}%

\subsubsection{Relationship Prior on 3DSSG. }

The 3DSSG dataset presents a notable co-visibility challenge: despite an average of ${\sim}245$ frames per scan, 25.3\% of ground-truth object pairs accumulate fewer than 10 joint frame observations, and 2.7\% never co-appear at all. 
Figure~\ref{fig:prior} stratifies relationship recall by the number of 2D co-observations per ground-truth triplet. 
Two findings stand out. First, even without the relationship prior, our uncertainty-aware fusion framework (``w/o Prior'') consistently outperforms FROSS, confirming that probabilistic node association and soft evidence redistribution alone improve recall for both well- and poorly-observed edges. This is further confirmed by a quantitative breakdown in Supplementary Sec.~C. 
Second, the relationship prior yields its largest gains in the sparsest bins, and its marginal contribution diminishes monotonically as co-observation count increases. This suggests that the prior behaves correctly: the prior fills in evidence that the observation stream cannot provide, while deferring to accumulated observations when they are plentiful.

\begin{figure}[htbp!]
    \centering
    \includegraphics[width=0.9\linewidth]{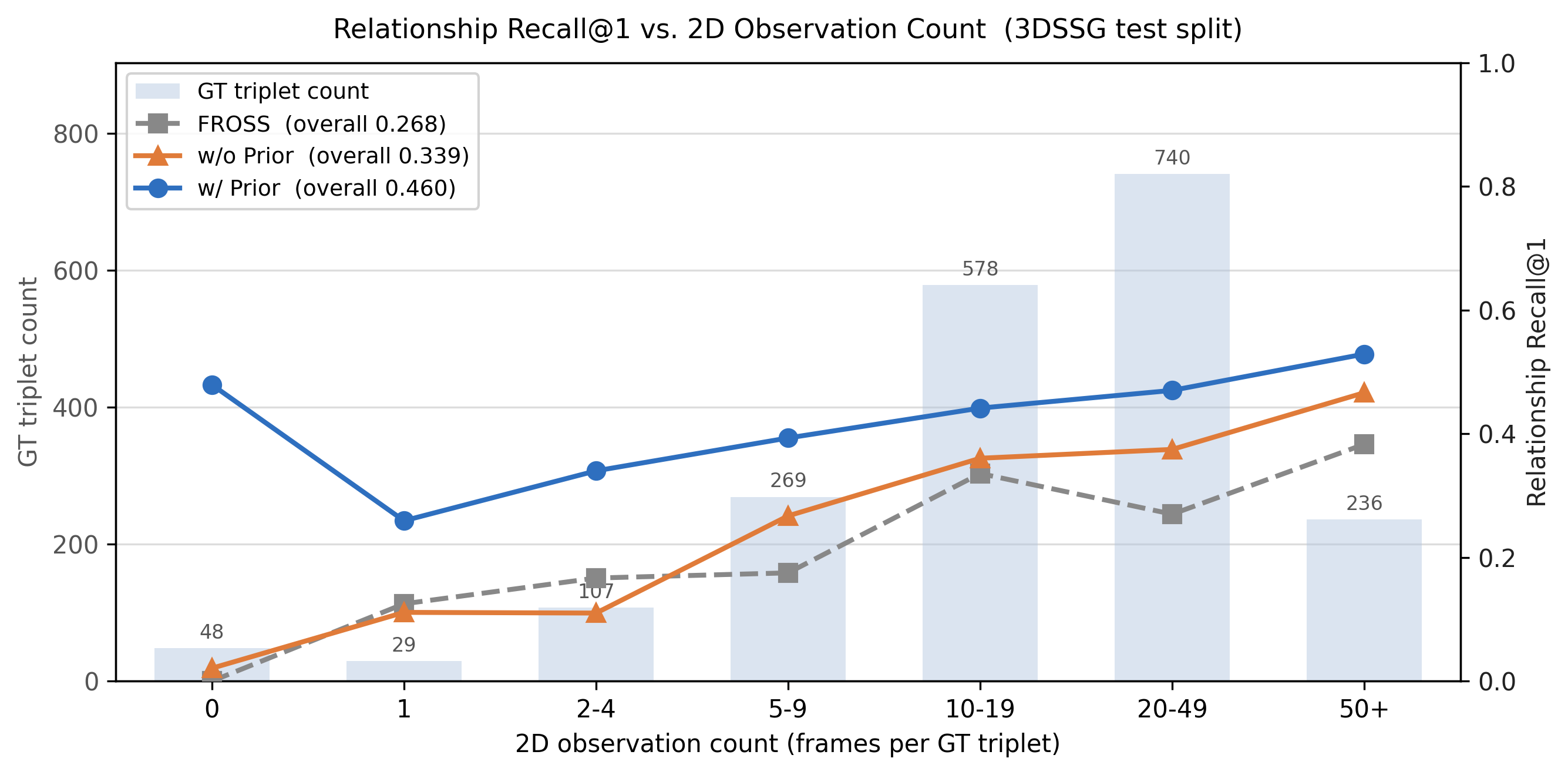}
    \caption{
        Relationship recall@1 under different 2D observation counts on 3DSSG test set. W/o prior denotes our full fusion framework without using relationship prior. 
    }
    \label{fig:prior}
    %\vspace{-3mm} %-3
\end{figure}

% ============================================================
\subsection{Qualitative Results}
% ============================================================

\begin{figure}[htbp!]
    \centering
    \includegraphics[width=\linewidth]{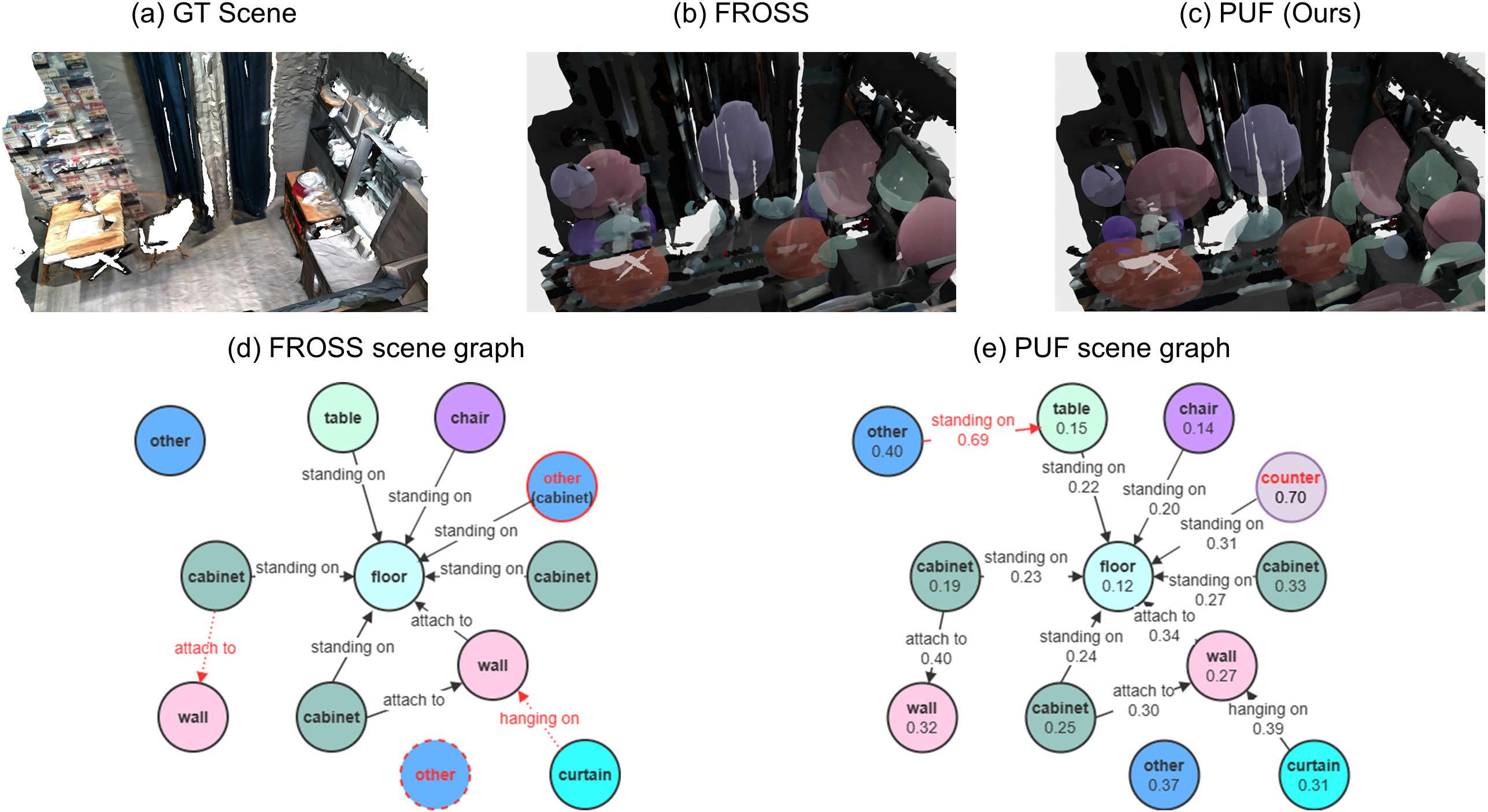}
    \caption{
        Qualitative between our PUF and FROSS~\cite{hou2025fross} on 3DSSG dataset with Gaussian 3D representation. Node colors correspond to the respective scene graphs. False and missing predictions are marked in \textcolor{red}{red}, with missing predictions in dashed line. 
    }
    \label{fig:vis}
\end{figure}

Figure~\ref{fig:vis} compares a representative scene from 3DSSG under FROSS and our method using the Gaussian backend. FROSS misclassifies several rare nodes in the \texttt{other} class (marked in red) and misses the rare predicate \texttt{hanging on} between \texttt{curtain} and \texttt{wall}, defaulting instead to the dominant \texttt{standing on}. Our method correctly recovers the object and the rare relationship, illustrating the benefit of soft evidence accumulation over hard label transfer. 
Additional qualitative outputs on ReplicaSSG and with the voxel backend are provided in Supplementary Sec.~E, which show consistent results.

Because the Dirichlet representation maintains a full distribution over labels, we can additionally visualize per-node and per-edge predictive uncertainty via the normalized entropy $H/H_{\max}$, where $H_{max}$ is the entropy over an even Dirichlet distribution. As shown in the right-hand graph of Fig.~\ref{fig:vis}, correctly classified nodes and edges exhibit low normalized entropy, while the few remaining errors, such as the \texttt{standing on} edge and the misclassified \texttt{counter}, carry notably higher uncertainty. This calibration property is a direct consequence of propagating the three sources of uncertainty identified in Sec.~\ref{sec:intro} through the fusion pipeline, and could be leveraged by downstream consumers to filter or re-query unreliable predictions.

% ============================================================
\section{Conclusion}
\label{sec:conclusion}
% ============================================================

We presented a plug-and-play, training-free fusion framework, \textbf{PUF}, for online 3D scene graph generation. Our framework replaces the deterministic merging pipelines of prior works with uncertainty-aware probabilistic association and Dirichlet evidence accumulation. By preserving 2D prediction distributions, modeling 3D representation uncertainty in node association, and optionally completing sparsely observed edges with a class-conditional prior, our framework yields substantial gains in relationship recall on both the 3DSSG and ReplicaSSG benchmarks while adding negligible latency. Its representation-agnostic design is validated with both Gaussian and voxel backends, and makes it readily applicable as a drop-in wrapper around any 2D scene graph model that outputs soft class and relation distributions.

% ============================================================

\section*{Acknowledgements}
This work has been supported by the 
Centre for Spatial Intelligence (RCSI) at University
of Bath,  the European Union under grant agreement no. 101136006-XTREME,
the European Innovation Council under grant agreement no. 101257536-CEREBRIS,
the MWK of Lower Saxony within Hybrint (VWZN4219) and LCIS (VWZN4704), the DFG under Germany’s Excellence Strategy within the Cluster of Excellence PhoenixD (EXC2122) and Quantum Frontiers (EXC2123).

% ============================================================
\section*{\centering Supplementary Materials}

\setcounter{section}{0}
\renewcommand{\thesection}{\Alph{section}}
% ---------------------------------------------------------------

% ============================================================
\section*{Overview}
\label{suppsec:intro}
% ============================================================

This supplementary document provides additional theoretical analysis, quantitative results, and qualitative visualizations that complement the main paper. The contents are organized as follows:

\begin{itemize}
    \item \textbf{Section~\ref{suppsec:norm}} provides a formal justification for the per-observation independent normalization used in our probabilistic node association. We show that under the $\epsilon$-separation condition enforced by the detector's non-maximum suppression, the factorized approximation we use recovers the exact Joint Probabilistic Data Association marginals with a bounded error of $O(NM\epsilon)$, which is negligible for typical frame sizes.
    \item \textbf{Section~\ref{suppsec:backend}} details the 2D-to-3D lifting and geometric merging of the 3D backends: the Gaussian representation adapted from FROSS and the voxel representation adapted from OnlineAnySeg, making the preliminaries of the main paper self-contained.
    \item \textbf{Section~\ref{suppsec:prior}} introduces a frequentist completion baseline that isolates the contribution of dataset statistics from PUF's probabilistic evidence accumulation, demonstrating that the Bayesian integration of prior evidence with accumulated observations is more effective than naive frequency-based edge completion.
    \item \textbf{Section~\ref{suppsec:per_class}} reports per-class object and predicate recall on both the 3DSSG and ReplicaSSG test sets, offering a fine-grained view of results.
    \item \textbf{Section~\ref{suppsec:hyper}} presents hyperparameter sensitivity analysis for the voxel backend on both datasets, complementing the Gaussian-backend analysis.
    \item \textbf{Section~\ref{suppsec:calibration}} shows that PUF is robust to temperature-scaled 2D model uncertainty, and remains plug-and-play to post-hoc calibration methods.
    \item \textbf{Section~\ref{suppsec:qualitative}}  provides additional qualitative comparisons between the Gaussian and voxel backends on 3DSSG and ReplicaSSG, respectively, illustrating that PUF produces consistent graph topology and interpretable uncertainty estimates across both 3D representations and datasets.
\end{itemize}

\newpage

% ============================================================
\section{Per-observation Normalization for Association}
\label{suppsec:norm}
% ============================================================

We provide a detailed analysis of how and why the Joint Probabilistic Data Association Filter (JPDAF) can be factorized into per-observation normalization under non-maximum suppression (NMS).

\subsection{Setup}

At frame $t$, the 2D detector produces $N$ bounding-box detections. Each detection is converted into a 3D Gaussian / voxel set and projected into the 3D space, forming the observation set:
\setcounter{equation}{14}
\begin{equation}
    \mathbf{Z} = \{z_1, \ldots, z_N\}
\end{equation}
Let $\mathcal{T} = \{1, \ldots, M\}$ denote the indices of the $M$ currently active global nodes in the 3D scene graph (\ie, \underline{tracks} in Joint Probability Data Association).
A \emph{joint association hypothesis} is a function:
\begin{equation}
    \theta : [N] \to \mathcal{T} \cup \{0\},
\end{equation}
that assigns each observation $i$ to either a new birth $\theta(i) = 0$ or to an existing global node $\theta(i) = k > 0$.

% ------------------------------------------------------------
\subsection{Exact Joint Posterior}
\label{sec:exact_jpdaf}
% ------------------------------------------------------------

\paragraph{Feasibility.}
A hypothesis $\theta$ is \emph{feasible} if each track emits at most one observation per frame:
\begin{equation}
    \Theta_{F}
    \;=\;
    \bigl\{\,\theta : [N] \to \mathcal{T} \cup \{0\}
    \;\big|\;
    \theta(i) = \theta(j) \neq 0 \;\Rightarrow\; i = j
    \,\bigr\}.
    \label{eq:feasible_set}
\end{equation}
$|\Theta_F|$ grows super-exponentially with $N$ and $M$: the number of feasible hypotheses equals $\sum_{k=0}^{\min(N,M)} \binom{N}{k}\binom{M}{k} k!\, 2^{N-k}$.
For typical frame sizes ($N \approx 20$, $M \approx 30$), this space of hypotheses is far too large to enumerate exhaustively in real time.

\paragraph{Likelihood model.}
Let $L [i, k] = L(z_i \mid k) \in (0, 1]$ denote the normalized likelihood that observation $i$ originates from global node $k$, and let $\lambda_b > 0$ denote the prior birth rate. The per-factor potential is defined as
\begin{equation}
    f_i(k)
    \;=\;
    \begin{cases}
        L [i, k] & k \in \mathcal{T}, \\[2pt]
        \lambda_b  & k = 0.
    \end{cases}
    \label{eq:potential}
\end{equation}
Under a Poisson birth model and conditionally independent observations given the scene state, the posterior over feasible joint hypotheses is
\begin{equation}
    P(\theta \mid \mathbf{Z})
    \;=\;
    \frac{1}{Z_F}
    \prod_{i=1}^{N} f_i\!\bigl(\theta(i)\bigr),
    \qquad
    Z_F
    \;=\;
    \sum_{\theta' \in \Theta_F}
    \prod_{i=1}^{N} f_i\!\bigl(\theta'(i)\bigr).
    \label{eq:joint_posterior}
\end{equation}

\paragraph{Exact JPDA marginals.}
The quantity of interest is the marginal association probability of observation $i$ with track $k$:
\begin{equation}
    \beta [i, k]
    \;=\;
    P\!\bigl(\theta(i) = k \mid \mathbf{Z}\bigr)
    \;=\;
    \sum_{\substack{\theta \in \Theta_F \\ \theta(i) = k}}
    P(\theta \mid \mathbf{Z})
    \;=\;
    \frac{L [i, k]}{Z_F}
    \underbrace{
        \sum_{\substack{\theta \in \Theta_F \\ \theta(i) = k}}
        \prod_{j \neq i} f_j\!\bigl(\theta(j)\bigr)
    }_{=:\; C_i(k)}.
    \label{eq:exact_marginal}
\end{equation}
The inner sum $C_i(k)$ requires marginalizing over all feasible completions of the remaining $N-1$ observations subject to the constraint that no other observation is assigned to $k$.
Computing $C_i(k)$ exactly is \#P-hard in general~\cite{mahler2007statistical}, as it is equivalent to computing the permanent of a $0/1$-weighted matrix.

% ------------------------------------------------------------
\subsection{Independent Per-Observation Approximation}
\label{sec:approx}
% ------------------------------------------------------------

\paragraph{Factorized approximation.}
Rather than summing over $\Theta_F$, our implementation drops the feasibility constraint and factorizes the joint distribution across observations:
\begin{equation}
    \hat{P}(\theta)
    \;=\;
    \prod_{i=1}^{N} \hat{P}_i\!\bigl(\theta(i)\bigr),
    \label{eq:factorised_joint}
\end{equation}
where each marginal is independently normalized:
\begin{equation}
    \hat{\beta}[i, k]
    \;=\;
    \hat{P}_i\!\bigl(\theta(i) = k\bigr)
    \;=\;
    \frac{L [i, k]}{\lambda_b + \displaystyle\sum_{k' \in \mathcal{T}} L[i, k']},
    \quad
    \hat{\beta}_i^{\,\mathrm{birth}}
    \;=\;
    \frac{\lambda_b}{\lambda_b + \displaystyle\sum_{k' \in \mathcal{T}} L[i, k']}.
    \label{eq:approx_marginal}
\end{equation}
The unconstrained partition function factorizes as
$Z = \prod_{i=1}^N \!\bigl(\lambda_b + \sum_{k} L [i, k]\bigr)$,
replacing the intractable sum over $\Theta_F$ with $N$ independent scalar normalizations.

% ------------------------------------------------------------
\subsection{Exactness Under Non-Maximum Suppression}
\label{sec:nms_exactness}
% ------------------------------------------------------------

We now formalize the condition under which the feasibility constraint in Eq.~\eqref{eq:joint_posterior} contributes negligible probability mass, so that the approximation Eq.~\eqref{eq:approx_marginal} recovers the exact marginals.

\begin{definition}[$\epsilon$-separation]
\label{def:separation}
Observation set $\mathbf{Z}$ is \emph{$\epsilon$-separated} with respect to $\mathcal{T}$ if, for every track $k \in \mathcal{T}$, at most one observation has non-negligible likelihood for $k$:
\begin{equation}
    \bigl|\bigl\{i \in [N] : L[i, k] > \epsilon\bigr\}\bigr| \;\leq\; 1
    \qquad \forall\, k \in \mathcal{T}.
    \label{eq:separation}
\end{equation}
\end{definition}

\begin{proposition}[Factorization under $\epsilon$-separation]
\label{prop:factorisation}
Let $\mathbf{Z}$ be $\epsilon$-separated with respect to $\mathcal{T}$.
Denote the unconstrained per-observation normalizer by
$Z_i = \lambda_b + \sum_k L[i, k] \geq \lambda_b > 0$.
Then the exact partition function satisfies
\begin{equation}
    Z - Z_F
    \;\leq\;
    \binom{N}{2} M \,\epsilon
    \prod_{i=1}^{N} Z_i,
    \label{eq:partition_bound}
\end{equation}
and the marginal approximation error satisfies
\begin{equation}
    \bigl|\beta[i, k] - \hat{\beta}[i, k]\bigr|
    \;\leq\;
    (N-1) M \,\epsilon
    \label{eq:marginal_bound}
\end{equation}
for every $i \in [N]$ and $k \in \mathcal{T}$.
\end{proposition}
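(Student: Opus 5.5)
The plan is to compare the two partition functions hypothesis by hypothesis. Expanding the product gives $Z=\prod_{i=1}^N Z_i=\sum_{\theta}\prod_i f_i(\theta(i))$, a sum over \emph{all} maps $\theta:[N]\to\mathcal{T}\cup\{0\}$. Hence $Z-Z_F$ is exactly the mass of the infeasible hypotheses $\theta\notin\Theta_F$. Every such $\theta$ has a collision: a pair $i<j$ and a track $k\in\mathcal{T}$ with $\theta(i)=\theta(j)=k$. A union bound over the $\binom{N}{2}$ pairs and the $M$ tracks therefore gives
\begin{equation*}
Z-Z_F\;\le\;\sum_{i<j}\sum_{k\in\mathcal{T}} L[i,k]\,L[j,k]\prod_{l\neq i,j} Z_l ,
\end{equation*}
where the remaining $N-2$ observations have been summed freely.

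Next, $\epsilon$-separation (Definition~\ref{def:separation}) is invoked on each summand. For a fixed $k$, at most one of $L[i,k]$ and $L[j,k]$ exceeds $\epsilon$. The other factor is at most $1$ because $L\in(0,1]$, so the product satisfies $L[i,k]L[j,k]\le\epsilon$. It then remains to compare $\prod_{l\neq i,j}Z_l$ with $\prod_l Z_l$. This step needs $Z_iZ_j\ge 1$, which holds for instance whenever $\lambda_b\ge1$. I expect this to be the delicate point. My first attempt will be the sharper estimate $L[i,k]\le Z_i$ and $L[j,k]\le\epsilon$, which only requires $Z_j\ge1$. If neither condition is available, I will state the bound with an explicit factor $\max(1,\lambda_b^{-2})$ absorbed into the $O(NM\epsilon)$ constant. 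Summing the $\binom N2 M$ terms then yields Eq.~\eqref{eq:partition_bound}.

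For the marginals I will write
\begin{equation*}
\beta[i,k]=\frac{L[i,k]\,C_i(k)}{Z_F},\qquad \hat\beta[i,k]=\frac{L[i,k]\prod_{j\neq i}Z_j}{Z}.
\end{equation*}
Let $\Delta_{\mathrm{num}}=\prod_{j\neq i}Z_j-C_i(k)$ and $\Delta_Z=Z-Z_F$, so that $L[i,k]\prod_{j\ne i}Z_j\,(1-\eta_1)$ and $Z(1-\eta_2)$ are the exact numerator and denominator, with relative defects $\eta_1=\Delta_{\mathrm{num}}/\prod_{j\ne i}Z_j$ and $\eta_2=\Delta_Z/Z$. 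Then $\beta/\hat\beta=(1-\eta_1)/(1-\eta_2)$, and because $\hat\beta\le1$ it suffices to bound $|\eta_2-\eta_1|$ up to second-order terms.

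The key observation concerns collisions between two observations $j,l\neq i$. Those among the other observations appear in both defects with essentially the same relative weight, since fixing $\theta(i)=k$ does not change the free sums over the other observations except through track $k$. These terms therefore cancel in $\eta_2-\eta_1$ to leading order. What survives are:
\begin{itemize}
\item collisions of $i$ with some $j\neq i$ on an arbitrary track $k'$, counted in $\eta_2$;
\item collisions of some $j\neq i$ with $i$'s assigned track $k$, counted in $\eta_1$.
\end{itemize}
Each family contributes at most $(N-1)M\epsilon$ by the same separation argument, and they enter $\eta_2-\eta_1$ with opposite signs, so their difference stays within $(N-1)M\epsilon$. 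This gives Eq.~\eqref{eq:marginal_bound}. Making this cancellation rigorous is the second point I expect to be hard. A clean route is to couple the two sums through inclusion--exclusion over collision events, or alternatively to accept a slightly weaker constant via the crude bound $|\beta-\hat\beta|\le\eta_1+\eta_2$.
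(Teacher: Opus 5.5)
You reproduce the paper's Step~1 union bound for $Z-Z_F$ exactly, and the point you flag is the one the paper skips: it simply writes $\prod_{\ell\notin\{i,j\}}Z_\ell\le\prod_\ell Z_\ell$, which needs $Z_iZ_j\ge 1$. This is more than delicate: without an extra hypothesis the partition bound is false. Take $N=2$, $M=1$, $L[1,1]=1$, $L[2,1]=\epsilon$, which is $\epsilon$-separated. The only infeasible hypothesis is $\theta=(1,1)$, so
\[
Z-Z_F=\epsilon,
\qquad
\binom{N}{2}M\epsilon\,Z_1Z_2=\epsilon\,(1+\lambda_b)(\lambda_b+\epsilon).
\]
The claim fails whenever $(1+\lambda_b)(\lambda_b+\epsilon)<1$, e.g.\ at the paper's own $\lambda_b=0.4$ with $\epsilon=0.01$. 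So ``summing the terms yields the partition bound'' holds only under an added assumption such as $\lambda_b\ge 1$, and your fallback is necessary, not optional. Your sharper estimate ($L[i,k]\le Z_i$, $L[j,k]\le\epsilon$) gives the valid bound $Z-Z_F\le\binom{N}{2}M(\epsilon/\lambda_b)\prod_\ell Z_\ell$. Note this is $O(N^2M\epsilon/\lambda_b)$, not $O(NM\epsilon)$.

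For the marginals, the gap is your claim that each family contributes at most $(N-1)M\epsilon$ ``by the same separation argument''. Separation controls the other observations on track $k$ only when $i$ is itself the strong observation for $k$. If $L[i,k]\le\epsilon$, some $j\neq i$ may have $L[j,k]$ near $1$. Then $\eta_1$ is of order one, and neither your cancellation nor the crude bound $\eta_1+\eta_2$ gives anything. In the same example with $i=2$, $k=1$ you get $\eta_1=1/(1+\lambda_b)$ and
\[
\bigl|\beta[2,1]-\hat{\beta}[2,1]\bigr|
=\frac{\epsilon}{\lambda_b+\epsilon}-\frac{\epsilon}{1+\lambda_b+\epsilon}
=\frac{\epsilon}{(\lambda_b+\epsilon)(1+\lambda_b+\epsilon)}.
\]
At $\lambda_b=0.4$ this exceeds $(N-1)M\epsilon=\epsilon$ already at first order in $\epsilon$ (coefficient $1/(0.4\cdot 1.4)\approx 1.79$), and it tends to $1/(1+\epsilon)$ as $\lambda_b\to 0$. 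So the marginal bound cannot be proved as stated; the relevant small parameter is $\epsilon/\lambda_b$, not $\epsilon$.

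A correct argument needs a case split. In the weak case, use the smallness of the marginals themselves rather than $\hat{\beta}\le 1$. We have $\hat{\beta}[i,k]=L[i,k]/Z_i\le\epsilon/\lambda_b$. Also $\beta[i,k]\le L[i,k]/\lambda_b$: with $C_i(0)$ the analogous completion sum for $\theta(i)=0$, we have $C_i(k)\le C_i(0)$, whence $Z_F\ge\lambda_b C_i(k)$. In the strong case your relative-defect comparison is the right idea, but each term is $L[i,k']L[j,k']/(Z_iZ_j)\le\epsilon/\lambda_b$. Moreover ``cancels to leading order'' still has to become an explicit remainder, which is small only if $\epsilon/\lambda_b$ is. For instance, with $N=M=2$, $L[1,1]=L[1,2]=1$, $L[2,1]=\epsilon$, $L[2,2]\to 0$ and $\lambda_b\to 0$, one gets $\beta[1,1]\to 0$ but $\hat{\beta}[1,1]\to 1/2$.

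The paper's Step~2 does not avoid any of this. Its claim (a), that each excluded $j$ contributes at most $\epsilon$, fails in the weak case. Its claim (b), that collisions among the other observations are $O(\epsilon^2)$, is wrong: they are $O(\epsilon)$, which is exactly what your cancellation is meant to handle. It also discards the factor $1/Z_i\le 1/\lambda_b$ it derives, ending at $(N-1)M\epsilon+O(\binom{N}{2}M\epsilon)$ rather than the stated constant. Your cancellation idea improves on the paper's triangle-inequality split, but either route can honestly deliver only a $\lambda_b$-dependent bound, not the proposition as written.
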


\begin{proof}
\textbf{Step 1: Bound on infeasible mass.}
Let $\Theta_F^c = \Theta \setminus \Theta_F$ denote the set of infeasible hypotheses, \ie those in which at least two distinct observations share a track.
Every infeasible hypothesis contains at least one \emph{conflicting pair}: a pair $(i, j)$ with $i \neq j$ and $\theta(i) = \theta(j) = k$ for some $k \in \mathcal{T}$.
By a union bound over conflicting pairs and tracks:
\begin{equation}
    Z - Z_F
    \;=\;
    \sum_{\theta \in \Theta_F^c}
    \prod_{i} f_i\!\bigl(\theta(i)\bigr)
    \;\leq\;
    \sum_{\{i,j\} \subseteq [N]}\;
    \sum_{k \in \mathcal{T}}
    L[i, k]\, L[j, k]
    \prod_{\ell \notin \{i,j\}} Z_\ell.
    \label{eq:infeasible_mass}
\end{equation}
By $\epsilon$-separation, for every track $k$ and every pair $(i, j)$,
at most one of $L[i, k], L[j, k]$ exceeds $\epsilon$, and both are bounded
above by $1$, so $L[i, k] L[j, k] \leq \epsilon$.
Substituting into Eq.~\eqref{eq:infeasible_mass}:
\begin{equation}
    Z - Z_F
    \;\leq\;
    \binom{N}{2} M\, \epsilon \!\prod_{\ell \notin \{i,j\}} Z_\ell
    \;\leq\;
    \binom{N}{2} M\, \epsilon \prod_{i=1}^{N} Z_i,
    \label{eq:partition_bound_proof}
\end{equation}
establishing Eq.~\eqref{eq:partition_bound}.

\textbf{Step 2: Bound on marginal error.}
Fix observation $i$ and track $k$.
Let $\hat{C}_i(k) = \prod_{j \neq i} Z_j$ be the unconstrained sum over completions of the remaining $N-1$ observations (\ie $C_i(k)$ without the feasibility constraint).
The constrained sum $C_i(k)$ from Eq.~\eqref{eq:exact_marginal} differs from $\hat{C}_i(k)$ in two ways:
(a) each $j \neq i$ is forbidden from being assigned to $k$, contributing at most $L[j, k] \leq \epsilon$ per observation per track to the excluded mass;
and (b) any two remaining observations $j, j'$ are forbidden from sharing the same track, but by the argument of Step 1 this contributes $O(\epsilon^2)$.

Retaining only the leading-order term:
\begin{equation}
    \hat{C}_i(k) - C_i(k)
    \;\leq\;
    (N-1) M\, \epsilon \prod_{\ell \neq i} Z_\ell
    \;+\;
    O(\epsilon^2),
    \label{eq:completion_bound}
\end{equation}
where the $(N-1)M$ factor counts the observations-times-tracks pairs that contribute a term of order $\epsilon$.
Similarly, $Z_F \geq Z - \binom{N}{2} M \epsilon \prod_\ell Z_\ell$, so $Z_F \geq Z(1 - \binom{N}{2} M \epsilon)$ to first order.

Combining, the absolute error in the marginal is:
\begin{align}
    \bigl|\beta[i, k] - \hat{\beta}[i, k]\bigr|
    &= \left|
        \frac{L[i, k]\, C_i(k)}{Z_F}
        - \frac{L[i, k]\, \hat{C}_i(k)}{Z}
       \right| \nonumber\\
    &\leq
    L[i, k]\left(
        \frac{|\hat{C}_i(k) - C_i(k)|}{Z_F}
        + \hat{C}_i(k)\,\frac{|Z - Z_F|}{Z_F\, Z}
    \right) \nonumber\\
    &\leq
    (N-1)M\,\epsilon
    + O\!\left(\binom{N}{2} M \epsilon\right)
    \;=\;
    O(N M \epsilon),
    \label{eq:marginal_bound_proof}
\end{align}
where the last line uses $L[i, k] \leq 1$, $C_i(k)/Z_F \leq 1$, and
$\hat{C}_i(k)/Z = \prod_{j \neq i} Z_j / \prod_j Z_j$ $= 1/Z_i \leq 1/\lambda_b$.
This establishes Eq.~\eqref{eq:marginal_bound} to leading order in $\epsilon$.
\end{proof}

\begin{corollary}[Exact factorization at $\epsilon = 0$]
\label{cor:exact}
If $\mathbf{Z}$ is $0$-separated, i.e.\ every track has at most one observation with positive likelihood, then $\Theta_F^c$ has zero weight under Eq.~\eqref{eq:joint_posterior}, $Z_F = Z$, $C_i(k) = \hat{C}_i(k)$, and $\beta[i, k] = \hat{\beta}[i, k]$ exactly.
\end{corollary}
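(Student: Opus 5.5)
The argument is a direct computation. At $\epsilon=0$, every infeasible hypothesis has zero weight, so the constrained and unconstrained sums agree term by term. I would not take a limit in Proposition~\ref{prop:factorisation}: its marginal bound holds only to leading order, so an exact identity is cleaner.

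\textbf{Step 1.} Show $Z_F = Z$. Write $Z = \sum_{\theta \in \Theta} \prod_i f_i(\theta(i))$, where $\Theta$ is the set of all maps $[N]\to\mathcal{T}\cup\{0\}$. By distributivity this sum equals $\prod_i Z_i$. Take any $\theta \in \Theta_F^c$. By definition there exist $i \neq j$ and $k \in \mathcal{T}$ with $\theta(i)=\theta(j)=k$. Zero-separation says at most one observation has $L[\cdot,k]>0$, and likelihoods are nonnegative. Hence $L[i,k]\,L[j,k]=0$, and the product $\prod_\ell f_\ell(\theta(\ell))$ vanishes. It follows that $Z - Z_F = \sum_{\theta\in\Theta_F^c}\prod_\ell f_\ell(\theta(\ell)) = 0$.

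One technical point needs handling here. The likelihood model states $L[i,k]\in(0,1]$, which would make $0$-separation vacuous unless $N\le 1$. I would read the corollary as allowing $L[i,k]\in[0,1]$, so that zero likelihoods are permitted, and note this explicitly.

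\textbf{Step 2.} Show $C_i(k) = \hat{C}_i(k)$ for each fixed $i$ and $k\in\mathcal{T}$. Consider the unconstrained sum over completions $\theta|_{[N]\setminus\{i\}}$ with $\theta(i)=k$. It expands to $\prod_{j\neq i} Z_j = \hat{C}_i(k)$. The completions excluded from $C_i(k)$ are those for which $\theta$ is infeasible. Each such completion has a collision, either between some $j$ and $i$ at track $k$, or between two indices $j,j'\neq i$ at some track $k'$.

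In the second case, the factor $f_j(k')f_{j'}(k')$ vanishes by zero-separation, exactly as in Step 1. The first case is the only subtle point. The factor $L[i,k]$ is not part of the completion product, so the argument must split on its value:
\begin{itemize}
\item If $L[i,k]>0$, then separation forces $L[j,k]=0$ for all $j\neq i$, so these completions also carry zero weight and $C_i(k)=\hat{C}_i(k)$.
\item If $L[i,k]=0$, then $C_i(k)$ and $\hat{C}_i(k)$ may differ. However, both marginals carry the prefactor $L[i,k]$ and are therefore $0$, so equality of the marginals is unaffected.
\end{itemize}
I would state the equality $C_i(k)=\hat{C}_i(k)$ under the condition $L[i,k]>0$, and treat $L[i,k]=0$ trivially.

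\textbf{Step 3.} Conclude. Substituting into Eq.~\eqref{eq:exact_marginal} gives
\[
\beta[i,k] = \frac{L[i,k]\,C_i(k)}{Z_F} = \frac{L[i,k]\prod_{j\neq i}Z_j}{\prod_j Z_j} = \frac{L[i,k]}{Z_i} = \hat{\beta}[i,k],
\]
which matches Eq.~\eqref{eq:approx_marginal}. The birth marginal follows in the same way. For $\theta(i)=0$ no constraint involves observation $i$, so the constrained completion sum equals $\prod_{j\neq i}Z_j$ once the vanishing collisions among the $j$ are removed.

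The main obstacle is the case analysis in Step 2: recognising that the $j$--$i$ collision at track $k$ is harmless either because separation kills $L[j,k]$ or because the prefactor $L[i,k]$ is itself zero.
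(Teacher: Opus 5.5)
Your argument is correct, but it takes a different route from the paper. The paper gives no separate proof; it simply reads the corollary off Proposition~\ref{prop:factorisation} at $\epsilon=0$. There, Eq.~\eqref{eq:partition_bound} yields $Z-Z_F\le 0$, hence $Z_F=Z$ because $\Theta_F\subseteq\Theta$ and all weights are nonnegative, and the marginal bound collapses to zero.

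That specialization is short and places the exact case on a continuum with $\epsilon>0$. However, it inherits the proposition's leading-order bookkeeping (the $O(\epsilon^2)$ remainder and the first-order bound on $Z_F$). More importantly, Step~2(a) of that proof assumes $L[j,k]\le\epsilon$ for every $j\neq i$, which holds only when $i$ is the dominant observation for track $k$.

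Your direct term-by-term computation avoids both problems. Your case split on $L[i,k]$ also shows that the corollary's unconditional claim $C_i(k)=\hat{C}_i(k)$ is false. Take $N=2$, $M=1$, $\lambda_b=1$, $L[1,1]=0$, $L[2,1]=1/2$, which is a $0$-separated configuration. Then $C_1(1)=\lambda_b=1$ but $\hat{C}_1(1)=Z_2=3/2$, while $\beta[1,1]=\hat{\beta}[1,1]=0$ still holds through the vanishing prefactor, exactly as you say.

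Your remark that $L[i,k]\in(0,1]$ makes $0$-separation unsatisfiable for $N\ge 2$, $M\ge 1$ is also well taken, and the birth-marginal extension is a correct extra. The only thing to add is that $Z_F\ge\lambda_b^N>0$, since the all-birth hypothesis is feasible. This makes $\beta[i,k]$ well defined when you conclude it equals $0$.
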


% ------------------------------------------------------------
\subsection{Application to EGTR with NMS}
\label{sec:nms_argument}
% ------------------------------------------------------------

\paragraph{NMS enforces $\epsilon$-separation in 2-D.}
EGTR~\cite{im2024egtr} applies class-conditioned non-maximum suppression before producing detections: any box $b_j$ with $\mathrm{IoU}(b_i, b_j) > \tau_{\mathrm{NMS}}$ and the same predicted class as a higher-scoring box $b_i$ is suppressed.
Each global node $k$ corresponds to a single physical object. If $k$ is visible in frame $t$, it projects to a compact image region whose extent is determined by the object's 3-D bounding volume and the camera intrinsics. Two distinct post-NMS detections of the same class cannot both overlap this region with $\mathrm{IoU} > \tau_{\mathrm{NMS}}$.
Since our likelihood $L[i, k] = L_{\mathrm{sp}}(i,k)\, L_{\mathrm{se}}(i,k)$ combines a spatial Bhattacharyya coefficient / containment score (which is small for non-overlapping Gaussians / voxels) and a class-conditional semantic term (which is near zero for different predicted classes), $\epsilon$-separation holds with
\begin{equation}
    \epsilon
    \;\lesssim\;
    \max_{i \neq j}\,
    \min\bigl(L[i, k],\, L[j, k]\bigr)
    \;\ll\; 1.
    \label{eq:epsilon_bound}
\end{equation}
Proposition~\ref{prop:factorisation} then guarantees that the independent per-observation normalization Eq.~\eqref{eq:approx_marginal} incurs a marginal error of at most $O(NM\epsilon)$, which for typical frame sizes ($N \approx 20$, $M \approx 30$) is negligible.

\paragraph{Relation to standard JPDAF.}
Classical JPDAF~\cite{bar2009probabilistic} is typically applied to radar or sonar tracking where clutter density is high and the
number of targets is moderate. In those settings, multiple detections of the same target are common and the feasibility constraint is necessary.
Our setting is essentially different. The detector's NMS post-processing reduces the per-frame miss-detection of the same object to negligible probability, making the feasibility constraint vacuous and the independent factorization exact in the $\epsilon \to 0$ limit established by Proposition~\ref{prop:factorisation}.

\newpage

% ============================================================
\section{Details of 2D-to-3D Lifting and 3D Merging}
\label{suppsec:backend}
% ============================================================

The main paper instantiates PUF with two 3D backends: a 3D Gaussian representation adapted from FROSS~\cite{hou2025fross} and a discrete voxel representation adapted from OnlineAnySeg~\cite{tang2025onlineanyseg}.
For completeness, this section provides the implementation details of (i) the 2D-to-3D \emph{lifting} that converts each 2D detection into a 3D primitive, and (ii) the geometric \emph{merging} that the two backends use to fuse a local observation into the global representation.

% ------------------------------------------------------------
\subsection{Gaussian Backend (FROSS)}
\label{suppsec:backend_gaussian}
% ------------------------------------------------------------

\paragraph{2D Gaussian representation of bounding boxes.}
Each detected object $v_i$ is first described by a 2D Gaussian $(\boldsymbol{\mu}_i^{\mathrm{2D}}, \boldsymbol{\Sigma}_i^{\mathrm{2D}})$ in the image plane.
Following FROSS, the bounding box is treated as a 2D \emph{uniform} distribution: the mean $\boldsymbol{\mu}_i^{\mathrm{2D}}=(p_x,p_y)^\top$ is the box center, and the covariance matches the second moment of a uniform box of width $W_i$ and height $H_i$,
\begin{equation}
    \boldsymbol{\Sigma}_i^{\mathrm{2D}}
    = \frac{1}{12}
    \begin{bmatrix}
        W_i^2 & 0 \\
        0 & H_i^2
    \end{bmatrix}.
    \label{eq:fross_2dcov}
\end{equation}

\paragraph{Back-projection to 3D.}
Given the depth $z$ at the box center and the camera intrinsics $K\in\mathbb{R}^{3\times3}$, rotation $R\in\mathbb{R}^{3\times3}$, and translation $\mathbf{t}\in\mathbb{R}^3$, the 3D mean is obtained by back-projecting the 2D center and transforming it to world coordinates,
\begin{equation}
    \boldsymbol{\mu}_i^{\mathrm{3D}}
    = R\,K^{-1}\,(p_x,p_y,1)^\top \cdot z + \mathbf{t}.
    \label{eq:fross_mean}
\end{equation}
The 2D covariance is lifted with the local affine (Jacobian) approximation of the perspective projection,
\begin{equation}
    J =
    \begin{bmatrix}
        \dfrac{f_x}{z} & 0 & -\dfrac{f_x p_x}{z^2} \\[8pt]
        0 & \dfrac{f_y}{z} & -\dfrac{f_y p_y}{z^2}
    \end{bmatrix},
    \label{eq:fross_jacobian}
\end{equation}
where $f_x,f_y$ are the focal lengths.
Because $J$ maps 3D to 2D, the covariance is back-projected through its pseudo-inverse $J^+$.
A naive inversion leaves the depth axis without variance, since the 2D box carries no depth-extent information; FROSS therefore injects a depth-axis variance equal to the average of the two recovered lateral variances, applied in the camera frame before the world rotation,
\begin{align}
    \boldsymbol{\Sigma}_i^{\mathrm{3D}\prime}
    &= J^+\,\boldsymbol{\Sigma}_i^{\mathrm{2D}}\,J^{+\top},
    \label{eq:fross_cov_pinv}\\[4pt]
    \boldsymbol{\Sigma}_i^{\mathrm{3D}\prime\prime}
    &= \boldsymbol{\Sigma}_i^{\mathrm{3D}\prime}
    +
    \begin{bmatrix}
        0 & 0 & 0 \\
        0 & 0 & 0 \\
        0 & 0 & \dfrac{(\boldsymbol{\Sigma}_i^{\mathrm{3D}\prime})_{1,1}+(\boldsymbol{\Sigma}_i^{\mathrm{3D}\prime})_{2,2}}{2}
    \end{bmatrix},
    \label{eq:fross_cov_depth}\\[4pt]
    \boldsymbol{\Sigma}_i^{\mathrm{3D}}
    &= R\,\boldsymbol{\Sigma}_i^{\mathrm{3D}\prime\prime}\,R^\top.
    \label{eq:fross_cov_world}
\end{align}
The resulting 3D Gaussian $\Omega_i=(\boldsymbol{\mu}_i,\boldsymbol{\Sigma}_i)$ with $\boldsymbol{\mu}_i=\boldsymbol{\mu}_i^{\mathrm{3D}}$ and $\boldsymbol{\Sigma}_i=\boldsymbol{\Sigma}_i^{\mathrm{3D}}$ is the observation node used throughout the fusion.

\paragraph{Gaussian merging.}
In FROSS, two nodes are fused when their Hellinger distance falls below a threshold $\delta_d$. For Gaussians $\mathcal{N}(\boldsymbol{\mu}_i,\boldsymbol{\Sigma}_i)$ and $\mathcal{N}(\boldsymbol{\mu}_j,\boldsymbol{\Sigma}_j)$, the Hellinger distance derives from the Bhattacharyya distance $B_D$,
\begin{align}
    H_D(i,j) &= \sqrt{1-\exp\!\bigl(-B_D(i,j)\bigr)},
    \label{eq:fross_hellinger}\\[4pt]
    B_D(i,j) &= \frac{1}{8}\,\Delta\boldsymbol{\mu}_{ij}^\top\,\boldsymbol{\Sigma}^{-1}\,\Delta\boldsymbol{\mu}_{ij}
    + \frac{1}{2}\ln\!\frac{\det\boldsymbol{\Sigma}}{\sqrt{\det\boldsymbol{\Sigma}_i\,\det\boldsymbol{\Sigma}_j}},
    \label{eq:fross_bhattacharyya}
\end{align}
with $\Delta\boldsymbol{\mu}_{ij}=\boldsymbol{\mu}_i-\boldsymbol{\mu}_j$ and $\boldsymbol{\Sigma}=\tfrac{1}{2}(\boldsymbol{\Sigma}_i+\boldsymbol{\Sigma}_j)$.
The matched Gaussians are fused by a confidence-weighted integration, where the weight $w_i$ of each node records its accumulated detection frequency across views, so that objects observed from more viewpoints contribute proportionally more,
\begin{align}
    \boldsymbol{\mu}_k &= \frac{w_i\boldsymbol{\mu}_i + w_j\boldsymbol{\mu}_j}{w_i+w_j},
    \label{eq:fross_mu_merge}\\[4pt]
    \boldsymbol{\Sigma}_k &= \frac{w_i\boldsymbol{\Sigma}_i + w_j\boldsymbol{\Sigma}_j}{w_i+w_j}
    + \frac{w_i w_j\,(\boldsymbol{\mu}_i-\boldsymbol{\mu}_j)(\boldsymbol{\mu}_i-\boldsymbol{\mu}_j)^\top}{(w_i+w_j)^2}.
    \label{eq:fross_sigma_merge}
\end{align}
The spatial likelihood $L_{\mathrm{sp}}$ of the main paper reuses the Bhattacharyya coefficient $BC=1-H_D^2=\exp(-B_D)$ from Eq.~\eqref{eq:fross_bhattacharyya} as a continuous overlap score, and the spatial update of node $k^*$ follows Eqs.~\eqref{eq:fross_mu_merge}--\eqref{eq:fross_sigma_merge}.

% ------------------------------------------------------------
\subsection{Voxel Backend (OnlineAnySeg)}
\label{suppsec:backend_voxel}
% ------------------------------------------------------------

\paragraph{Voxel lifting.}
For the voxel backend, each detected mask is lifted into 3D by back-projecting its depth-filtered pixels into world coordinates and quantizing them onto a uniform grid of resolution $\delta$, producing a voxel set $\Psi_{obs}$.
Depth filtering keeps only pixels whose depth deviates from the box-center depth by at most $\varepsilon_d$, which isolates the foreground object from background clutter falling inside the 2D box.

\paragraph{Voxel merging.}
OnlineAnySeg quantifies the spatial association between two voxel sets $V_a,V_b$ through an asymmetric \emph{overlap ratio}. Restricting to the voxels of $V_b$ that are visible from the frames $X(m_a)$ in which $m_a$ is observed, $\mathrm{Vis}(V_b,X(m_a))=\{v\in V_b\mid v\to X(m_a)\}$, the ratio is
\begin{equation}
    or_{(a,b)} = \frac{|V_a\cap V_b|}{|\mathrm{Vis}(V_b,X(m_a))|},
    \label{eq:anyseg_overlap}
\end{equation}
which measures how much of $m_b$ falls inside $m_a$ within their co-visible region. Two masks are merged when their overall similarity (combining overlap ratio with semantic and geometric feature similarity) exceeds a threshold, or when enough third-view masks support the association.
When merged, the new voxel set is the union of the constituent sets and the new weight is the sum of their individual weights,
\begin{equation}
    \Psi_k = \Psi_i \cup \Psi_j,
    \qquad
    w_k = w_i + w_j,
    \label{eq:anyseg_merge}
\end{equation}
and the global identity of every affected voxel is redirected to the merged node through an append-only mapping table.
PUF adopts the voxel-set union of Eq.~\eqref{eq:anyseg_merge} for the spatial update of $k^*$, and uses the simplified, single-frame containment score $|\Psi_{obs}\cap\Psi_k|/|\Psi_{obs}|$ defined in the main paper as the spatial likelihood $L_{\mathrm{sp}}$ in place of the visibility-restricted overlap ratio of Eq.~\eqref{eq:anyseg_overlap}, since the per-observation likelihood already operates on the current frame.

\newpage

% ============================================================
\section{Frequentist vs. Probabilistic Prior on 3DSSG}
\label{suppsec:prior}
% ============================================================

The frequentist completion baseline is designed to isolate the contribution of the dataset statistics in PUF's relationship prior by testing how well a simple co-occurrence prior alone can recover unobserved scene graph edges. 
The frequentist baseline applies the same training-set co-occurrence statistics as PUF but assigns them as hard completions to unobserved edges, without Dirichlet evidence accumulation.
Concretely, for each unobserved pair $(i, j)$, it looks up the empirical co-occurrence probability $P_{ex}(r\mid c_i,c_j)$. If this exceeds the same completion threshold of 0.5 as PUF, the relation distribution $P_{cl}(r\mid c_i,c_j)\cdot P_{sp}(\boldsymbol{\mu}_i,\boldsymbol{\mu}_j)$ is assigned as the prediction.
This contrasts directly with PUF's probabilistic relationship prior. The frequentist baseline therefore separates the effect of injecting dataset statistics, making any performance gap between it and PUF attributable specifically to the probabilistic evidence redistribution and accumulation in the latter.

\setcounter{table}{5}
\begin{table}[htbp]
  \centering
  \caption{Performance comparison using different forms of prior on 3DSSG.}
    \begin{tabular}{c|c|ccc|cc}
    \toprule
    \multicolumn{2}{c|}{\multirow{2}[2]{*}{Method}} & \multicolumn{3}{c|}{Recall@1(\%)$\uparrow$} & \multicolumn{2}{c}{mRecall@1(\%)$\uparrow$} \\
    \multicolumn{2}{c|}{} & \multicolumn{1}{c}{Rel.} & \multicolumn{1}{c}{Obj.} & \multicolumn{1}{c|}{Pred.} & \multicolumn{1}{c}{Obj.} & \multicolumn{1}{c}{Pred.} \\
    \midrule
    \multirow{4}[2]{*}{w/o prior} & FROSS-Voxel & 23.7  & 61.6  & 29.5  & 62.6  & 16.5 \\
          & PUF-Voxel & 31.3  & 64.9  & 35.2  & 64.1  & 18.3 \\
          & FROSS-Gaussian & 27.9  & 62.4  & 33.0  & 63.8  & 18.0 \\
          & PUF-Gaussian & 33.9  & \underline{69.0}  & 39.2  & \underline{65.8}  & 22.4 \\
    \midrule
    \multirow{4}[2]{*}{Freq. prior} & FROSS-Voxel & 30.4  & 61.6  & 37.8  & 62.6  & 19.7 \\
          & PUF-Voxel & 38.2  & 64.9  & 44.3  & 64.1  & 20.2 \\
          & FROSS-Gaussian & 36.7  & 62.4  & 44.6  & 63.8  & 26.4 \\
          & PUF-Gaussian & \underline{43.1}  & \underline{69.0}  & \underline{48.2}  & \underline{65.8}  & \underline{27.3} \\
    \midrule
    \multirow{2}[2]{*}{Prob. Prior} & PUF-Voxel & 40.3  & 65.5  & 46.1  & 64.1  & 21.8 \\
          & PUF-Gaussian & \textbf{46.0}  & \textbf{69.7}  & \textbf{51.4}  & \textbf{65.8}  & \textbf{28.2} \\
    \bottomrule
    \end{tabular}%
  \label{supptab:prior}%
\end{table}%

Table~\ref{supptab:prior} compares three edge-completion strategies on the 3DSSG test set: no prior, a frequentist prior, and our probabilistic prior. Two trends are evident. First, PUF without any prior already outperforms original FROSS, and is comparable to FROSS with the frequentist prior under both backends (e.g., 31.3\% vs.\ 30.4\% relationship recall for voxel), confirming that probabilistic node association and soft evidence redistribution provide gains independent of dataset statistics. Second, upgrading from the frequentist prior to our probabilistic prior yields a further consistent improvement (e.g., 43.1\%$\to$46.0\% for PUF-Gaussian), demonstrating that the Bayesian integration of prior evidence with accumulated observations is more effective than naive frequency-based completion. The gap is especially pronounced for predicate mean recall, where the probabilistic prior's ability to modulate confidence via the existence gate $P_{ex}$ suppresses false-positive completions for rarely interacting class pairs.

% ============================================================
\section{Additional Quantitative Results}
\label{suppsec:quantitative}
% ============================================================

\subsection{Object and Predicate Performance per Class}
\label{suppsec:per_class}

Table~\ref{supptab:per_class_3dssg} reports per-class object recall on the 3DSSG test set. PUF-Gaussian achieves the highest mean object recall (65.8\%) and matches or improves FROSS on the majority of classes. Notably, classes with high visual ambiguity or low frequency, such as \texttt{chair}, \texttt{picture}, and \texttt{window}, benefit from Dirichlet evidence accumulation, which allows soft class evidence to be aggregated across multiple partial observations rather than committed to a single argmax label. PUF-Voxel exhibits a similar pattern with a mean recall of 64.1\%, confirming the representation-agnostic nature of the improvements. A few classes (e.g., \texttt{bed}, \texttt{toilet}) show minor regressions under one backend but not the other, reflecting some sensitivity of geometric overlap computation to the choice of 3D representation for objects with atypical spatial extent.

\begin{table}[htbp]
  \centering
  \caption{Per-class object recall comparison on 3DSSG test set.}
  \resizebox{\textwidth}{!}{
    \begin{tabular}{l|rrrrrrrrrrrrrrrrrrrr|r}
    \toprule
    \multicolumn{1}{c|}{\multirow{2}[2]{*}{Method}} & \multicolumn{20}{c|}{Class / Recall@1(\%)$\uparrow$}                                                                                                                        & \multicolumn{1}{c}{\multirow{2}[2]{*}{mean}} \\
          & \multicolumn{1}{l}{bath.} & \multicolumn{1}{l}{bed} & \multicolumn{1}{l}{bkshf.} & \multicolumn{1}{l}{cab.} & \multicolumn{1}{l}{chair} & \multicolumn{1}{l}{cntr.} & \multicolumn{1}{l}{curt.} & \multicolumn{1}{l}{desk} & \multicolumn{1}{l}{door} & \multicolumn{1}{l}{floor} & \multicolumn{1}{l}{ofurn.} & \multicolumn{1}{l}{pic.} & \multicolumn{1}{l}{refri.} & \multicolumn{1}{l}{show.} & \multicolumn{1}{l}{sink} & \multicolumn{1}{l}{sofa} & \multicolumn{1}{l}{table.} & \multicolumn{1}{l}{toil.} & \multicolumn{1}{l}{wall} & \multicolumn{1}{l|}{wind.} &  \\
    \midrule
    3DSSG~\cite{wald2020learning} & 25.0  & 66.7  & 0.0   & 20.0  & 51.0  & 25.8  & 50.5  & 0.0   & 47.7  & 91.4  & 14.7  & 3.4   & 22.2  & 14.3  & 25.0  & 47.4  & 42.5  & 25.9  & 51.9  & 13.1  & 31.9 \\
    SGFN~\cite{wu2021scenegraphfusion}  & 75.0  & 33.3  & 0.0   & 50.8  & 63.6  & 19.4  & 40.5  & 8.3   & 38.7  & \textbf{96.9}  & 23.0  & 11.4  & 11.1  & 0.0   & 38.3  & 55.3  & 62.3  & 51.9  & 73.0  & 13.1  & 38.3 \\
    MonoSSG~\cite{wu2023incremental} & 75.0  & \textbf{100.0} & 0.0   & 50.4  & 65.6  & 19.4  & 45.9  & 12.5  & 34.2  & \textbf{96.9}  & 25.1  & 5.7   & 0.0   & 14.3  & 38.3  & 57.9  & 59.9  & 66.7  & 76.1  & 15.5  & 43.0 \\
    SCRSSG~\cite{yeo2025statistical} & \textbf{100.0} & \textbf{100.0} & \textbf{33.3}  & \textbf{62.2}  & 63.0  & 33.3  & 54.5  & \textbf{60.0}  & 52.8  & 88.0  & 30.4  & 17.1  & \underline{66.7}  & \textbf{66.7}  & 33.3  & 60.0  & 62.3  & 80.0  & 69.3  & 37.5  & 60.5 \\
    VGfM~\cite{gay2018visual}  & 0.0   & 66.7  & 0.0   & 34.6  & 49.4  & 0.0   & 48.6  & 4.2   & 19.8  & 95.7  & 14.1  & 1.1   & 0.0   & 0.0   & 23.3  & 57.9  & 56.9  & 63.0  & \underline{78.0}  & 17.9  & 31.6 \\
    IMP~\cite{xu2017scene}   & 0.0   & 66.7  & 0.0   & 38.1  & 45.3  & 0.0   & 47.7  & 0.0   & 8.1   & 95.1  & 19.9  & 2.3   & 0.0   & 0.0   & 20.0  & 47.4  & 48.5  & 66.7  & 77.0  & 17.9  & 30.0 \\
    FROSS~\cite{hou2025fross} & \textbf{100.0} & 83.3  & \underline{28.6}  & 56.1  & 64.8  & \textbf{67.7}  & \underline{73.0}  & 29.2  & \textbf{73.3}  & 91.5  & 40.3  & \underline{41.9}  & 50.0  & 42.9  & \textbf{73.3}  & \underline{73.7}  & \underline{68.2}  & \textbf{100.0} & 60.9  & 57.5  & 63.8 \\
    \midrule
    PUF-Voxel & \textbf{100.0} & \textbf{100.0} & \underline{28.6}  & 54.3  & \underline{66.4}  & 43.3  & 64.5  & \underline{33.3}  & \underline{73.2}  & 91.0  & \underline{43.2}  & 24.7  & \textbf{100.0} & \underline{50.0}  & 61.1  & \textbf{76.7}  & 65.6  & \underline{88.5}  & 72.6  & \textbf{62.1}  & \underline{64.1} \\
    PUF-Gaussian & \textbf{100.0} & 50.0  & \underline{28.6}  & \underline{58.8}  & \textbf{72.5}  & \underline{63.3}  & \textbf{74.5}  & \underline{33.3}  & 66.1  & 94.2  & \textbf{44.8}  & \textbf{51.6}  & 80.0  & \underline{50.0}  & \underline{68.5}  & 71.2  & \textbf{69.9}  & \textbf{100.0} & \textbf{78.6}  & \underline{59.8}  & \textbf{65.8} \\
    \bottomrule
    \end{tabular}%
    }
  \label{supptab:per_class_3dssg}%
\end{table}%

\begin{table}[htbp]
  \centering
  \caption{Per-class object recall comparison on ReplicaSSG test set.}
  \resizebox{\textwidth}{!}{
    \begin{tabular}{c|lllllllllllllllll|c}
    \toprule
    Method & \multicolumn{17}{c|}{Class / Recall@1(\%)$\uparrow$}                                                                                                  & mean \\
    \midrule
    \multirow{4}[2]{*}{FROSS~\cite{hou2025fross}} & \multicolumn{1}{c}{bag} & \multicolumn{1}{c}{bskt.} & \multicolumn{1}{c}{bed} & \multicolumn{1}{c}{bench} & \multicolumn{1}{c}{bike} & \multicolumn{1}{c}{book} & \multicolumn{1}{c}{botl.} & \multicolumn{1}{c}{bowl} & \multicolumn{1}{c}{box} & \multicolumn{1}{c}{cab.} & \multicolumn{1}{c}{chair} & \multicolumn{1}{c}{clock} & \multicolumn{1}{c}{cntr.} & \multicolumn{1}{c}{cup} & \multicolumn{1}{c}{curt.} & \multicolumn{1}{c}{desk} & \multicolumn{1}{c|}{door} & \multirow{4}[2]{*}{28.8} \\
          & \textbf{25.0}  & \underline{50.0}  & 0.0   & 0.0   & 0.0   & \underline{1.5}  & \underline{9.1}  & \underline{37.5}  & \textbf{4.0}  & \textbf{14.3}  & 68.1  & \textbf{66.7}  & \textbf{40.0}  & 33.3  & \underline{9.1}  & 0.0   & 80.0  &  \\
          & \multicolumn{1}{c}{lamp} & \multicolumn{1}{c}{pil.} & \multicolumn{1}{c}{plant} & \multicolumn{1}{c}{plate} & \multicolumn{1}{c}{pot} & \multicolumn{1}{c}{rail.} & \multicolumn{1}{c}{scrn.} & \multicolumn{1}{c}{shlf.} & \multicolumn{1}{c}{shoe} & \multicolumn{1}{c}{sink} & \multicolumn{1}{c}{stand} & \multicolumn{1}{c}{table} & \multicolumn{1}{c}{toil.} & \multicolumn{1}{c}{towel} & \multicolumn{1}{c}{umb.} & \multicolumn{1}{c}{vase} & \multicolumn{1}{c|}{wind.} &  \\
          & \textbf{16.7}  & 41.5  & 47.4  & 31.2  & \underline{7.7}  & 0.0   & 0.0   & 11.1  & \underline{8.3}  & \textbf{100.0} & 0.0   & \textbf{72.2}  & \textbf{100.0} & 0.0   & \textbf{66.7}  & \textbf{38.9}  & 0.0   &  \\
    \midrule
    \multirow{4}[2]{*}{PUF-Voxel} & \multicolumn{1}{c}{bag} & \multicolumn{1}{c}{bskt.} & \multicolumn{1}{c}{bed} & \multicolumn{1}{c}{bench} & \multicolumn{1}{c}{bike} & \multicolumn{1}{c}{book} & \multicolumn{1}{c}{botl.} & \multicolumn{1}{c}{bowl} & \multicolumn{1}{c}{box} & \multicolumn{1}{c}{cab.} & \multicolumn{1}{c}{chair} & \multicolumn{1}{c}{clock} & \multicolumn{1}{c}{cntr.} & \multicolumn{1}{c}{cup} & \multicolumn{1}{c}{curt.} & \multicolumn{1}{c}{desk} & \multicolumn{1}{c|}{door} & \multirow{4}[2]{*}{\underline{30.2}} \\
          & \textbf{25.0}  & 40.0  & \textbf{25.0}  & 0.0   & \textbf{16.7}  & \underline{1.5}  & 6.1   & 34.4  & 0.0   & \textbf{14.3}  & \underline{81.2}  & \textbf{66.7}  & \textbf{40.0}  & 33.3  & \underline{9.1}  & 0.0   & \textbf{86.7}  &  \\
          & \multicolumn{1}{c}{lamp} & \multicolumn{1}{c}{pil.} & \multicolumn{1}{c}{plant} & \multicolumn{1}{c}{plate} & \multicolumn{1}{c}{pot} & \multicolumn{1}{c}{rail.} & \multicolumn{1}{c}{scrn.} & \multicolumn{1}{c}{shlf.} & \multicolumn{1}{c}{shoe} & \multicolumn{1}{c}{sink} & \multicolumn{1}{c}{stand} & \multicolumn{1}{c}{table} & \multicolumn{1}{c}{toil.} & \multicolumn{1}{c}{towel} & \multicolumn{1}{c}{umb.} & \multicolumn{1}{c}{vase} & \multicolumn{1}{c|}{wind.} &  \\
          & \textbf{16.7}  & \underline{43.4}  & \textbf{57.9}  & \underline{37.5}  & \underline{7.7}  & 0.0   & 0.0   & \underline{22.2}  & \underline{8.3}  & \underline{75.0}  & 0.0   & \textbf{72.2}  & \textbf{100.0} & 0.0   & \textbf{66.7}  & \underline{33.3}  & \underline{6.7}   &  \\
    \midrule
    \multirow{4}[2]{*}{PUF-Gaussian} & \multicolumn{1}{c}{bag} & \multicolumn{1}{c}{bskt.} & \multicolumn{1}{c}{bed} & \multicolumn{1}{c}{bench} & \multicolumn{1}{c}{bike} & \multicolumn{1}{c}{book} & \multicolumn{1}{c}{botl.} & \multicolumn{1}{c}{bowl} & \multicolumn{1}{c}{box} & \multicolumn{1}{c}{cab.} & \multicolumn{1}{c}{chair} & \multicolumn{1}{c}{clock} & \multicolumn{1}{c}{cntr.} & \multicolumn{1}{c}{cup} & \multicolumn{1}{c}{curt.} & \multicolumn{1}{c}{desk} & \multicolumn{1}{c|}{door} & \multirow{4}[2]{*}{\textbf{33.7}} \\
          & \textbf{25.0}  & \textbf{60.0}  & 0.0   & 0.0   & 0.0   & \textbf{3.7}  & \textbf{18.2}  & \textbf{40.6}  & 0.0   & \textbf{14.3}  & \textbf{85.5}  & \textbf{66.7}  & \textbf{40.0}  & \textbf{42.9}  & \textbf{12.7}  & 0.0   & \textbf{86.7}  &  \\
          & \multicolumn{1}{c}{lamp} & \multicolumn{1}{c}{pil.} & \multicolumn{1}{c}{plant} & \multicolumn{1}{c}{plate} & \multicolumn{1}{c}{pot} & \multicolumn{1}{c}{rail.} & \multicolumn{1}{c}{scrn.} & \multicolumn{1}{c}{shlf.} & \multicolumn{1}{c}{shoe} & \multicolumn{1}{c}{sink} & \multicolumn{1}{c}{stand} & \multicolumn{1}{c}{table} & \multicolumn{1}{c}{toil.} & \multicolumn{1}{c}{towel} & \multicolumn{1}{c}{umb.} & \multicolumn{1}{c}{vase} & \multicolumn{1}{c|}{wind.} &  \\
          & \textbf{16.7}  & \textbf{52.8}  & \underline{36.8}  & \textbf{43.8}  & \textbf{15.4}  & \textbf{5.9}  & \textbf{10.0}  & \textbf{33.3}  & \textbf{16.7}  & \textbf{100.0} & 0.0   & \underline{63.9}  & \textbf{100.0} & \textbf{40.0}  & \textbf{66.7}  & \underline{33.3}  & \textbf{13.3}  &  \\
    \bottomrule
    \end{tabular}%
    }
  \label{supptab:per_class_replica}%
\end{table}%

Table~\ref{supptab:per_class_replica} presents per-class object recall on the ReplicaSSG test set, which has a substantially larger and more fine-grained label space (33 classes) than 3DSSG. PUF-Gaussian achieves the highest mean recall of 33.7\%, improving over FROSS (28.8\%) across a broad range of categories. The gains are most pronounced for small or frequently occluded objects such as \texttt{bottle}, \texttt{shelf}, and \texttt{pillow}, where multiple uncertain observations must be correctly fused to recover the instance. PUF-Voxel similarly improves over FROSS (30.2\% vs.\ 28.8\%), with particularly strong gains on \texttt{chair} and \texttt{plant}. Several rare categories (\texttt{bench}, \texttt{railing}, \texttt{screen}) remain at zero or near-zero recall for all methods, indicating that these classes are limited by 2D detection failure rather than 3D fusion.

Table~\ref{supptab:per_pred_3dssg} breaks down predicate recall by class on the 3DSSG test set. Without the relationship prior, PUF-Gaussian already improves over FROSS on the two most frequent predicates, \texttt{attached to} (33.4\% vs.\ 29.4\%) and \texttt{standing on} (55.6\% vs.\ 47.2\%), demonstrating that soft evidence redistribution benefits well-observed edges. Adding the prior yields dramatic gains on \texttt{attached to} (33.4\%$\to$61.6\%) and \texttt{part of} (0.0\%$\to$13.3\%), both of which suffer from poor co-observation and are therefore heavily reliant on prior completion. The \texttt{connected to} predicate remains at 0.0\% across all online methods, likely due to 2D SGG failure. Consistent trends hold for the voxel backend, with the prior providing large lift on the same sparse predicates.

\begin{table}[htbp]
  \centering
  \caption{Per-class predicate recall comparison on 3DSSG test set.}
  \resizebox{\textwidth}{!}{
    \begin{tabular}{l|lllllll|l}
    \toprule
    \multicolumn{1}{c|}{\multirow{2}[2]{*}{Method}} & \multicolumn{7}{c|}{Predicate / Recall@1(\%)$\uparrow$}             & \multicolumn{1}{c}{\multirow{2}[2]{*}{mean}} \\
          & \multicolumn{1}{c}{attached to} & \multicolumn{1}{c}{build in} & \multicolumn{1}{c}{connected to} & \multicolumn{1}{c}{hanging on} & \multicolumn{1}{c}{part of} & \multicolumn{1}{c}{standing on} & \multicolumn{1}{c|}{supported by} &  \\
    \midrule
    3DSSG~\cite{wald2020learning} & 46.6  & 15.4  & 10.9  & 11.9  & 0.0   & 1.8   & 14.3  & 14.4 \\
    SGFN~\cite{wu2021scenegraphfusion}  & \underline{58.4}  & 33.3  & \underline{32.6}  & \textbf{26.1}  & 0.0   & 1.0   & \textbf{16.5}  & 24.0 \\
    MonoSSG~\cite{wu2023incremental} & 58.0  & 33.3  & \textbf{39.1}  & \textbf{26.1}  & \underline{12.5}  & 1.5   & \underline{15.4}  & \underline{26.6} \\
    VGfM~\cite{gay2018visual}  & 49.1  & 2.6   & 10.9  & 5.2   & 0.0   & 0.5   & 8.8   & 11.0 \\
    IMP~\cite{xu2017scene}   & 48.4  & 7.7   & 21.7  & 11.9  & 0.0   & 1.4   & 5.5   & 13.8 \\
    FROSS~\cite{hou2025fross} & 29.4  & 43.6  & 0.0   & 1.4   & 0.0   & 47.2  & 4.2   & 18.0 \\
    \midrule
    PUF-Voxel w/o prior & 32.6  & 33.8  & 0.0   & 5.8   & 6.7   & 43.7  & 4.8   & 18.3 \\
    PUF-Voxel w/ prior & 56.1  & 36.7  & 0.0   & 5.9   & 3.0   & 45.9  & 4.8   & 21.8 \\
    PUF-Gaussian w/o prior & 33.4  & \underline{51.4}  & 0.0   & 7.2   & 0.0   & \underline{55.6}  & 9.5   & 22.4 \\
    PUF-Gaussian w/ prior & \textbf{61.6}  & \textbf{56.8}  & 0.0   & 2.9   & \textbf{13.3}  & \textbf{58.1}  & \textbf{16.5}  & \textbf{28.2} \\
    \bottomrule
    \end{tabular}%
    }
  \label{supptab:per_pred_3dssg}%
\end{table}%

Table~\ref{supptab:per_pred_replica} reports per-class predicate recall on the ReplicaSSG test set, where no relationship prior is used. PUF-Gaussian outperforms FROSS on six of eight predicates and achieves the highest mean predicate recall of 26.2\%. The most substantial improvements appear on \texttt{above} (44.4\% vs.\ 22.2\%),  which depends on accurate spatial reasoning that benefits from uncertainty-aware node association. The \texttt{against} and \texttt{attached to} predicates remain at 0.0\% for all methods, reflecting the near-absence of these relations in the test scenes, and the difficulty of zero-shot transfer from Visual Genome for such spatially subtle predicates. PUF-Voxel shows modest regressions on \texttt{on} and \texttt{with} relative to FROSS, suggesting that the discrete voxel overlap metric is less effective than the Gaussian Bhattacharyya coefficient for capturing the fine spatial distinctions required by these predicates.

\begin{table}[htbp]
  \centering
  \caption{Per-class predicate recall comparison on ReplicaSSG test set.}
  \resizebox{0.8\textwidth}{!}{
    \begin{tabular}{l|llllllll|l}
    \toprule
    \multicolumn{1}{c|}{\multirow{2}[2]{*}{Method}} & \multicolumn{8}{c|}{Predicate / Recall@1(\%)$\uparrow$}                     & \multicolumn{1}{c}{\multirow{2}[2]{*}{mean}} \\
          & \multicolumn{1}{c}{above} & \multicolumn{1}{c}{against} & \multicolumn{1}{c}{attached to} & \multicolumn{1}{c}{in} & \multicolumn{1}{c}{near} & \multicolumn{1}{c}{on} & \multicolumn{1}{c}{under} & \multicolumn{1}{c|}{with} &  \\
    \midrule
    FROSS~\cite{hou2025fross} & \underline{22.2}  & 0.0   & 0.0   & \textbf{33.3}  & 28.8  & \underline{19.1}  & \textbf{10.0}  & \underline{50.0}  & \underline{20.4} \\
    PUF-Voxel & \underline{22.2}  & 0.0   & 0.0   & \textbf{33.3}  & \underline{30.1}  & 18.4  & \textbf{10.0}  & 35.0  & 17.9 \\
    PUF-Gaussian & \textbf{44.4}  & 0.0   & 0.0   & \textbf{33.3}  & \textbf{37.3}  & \textbf{24.7}  & \textbf{10.0}  & \textbf{60.0}  & \textbf{26.2} \\
    \bottomrule
    \end{tabular}%
    }
  \label{supptab:per_pred_replica}%
\end{table}%

% ============================================================
\subsection{Hyperparameters on Voxel Backend}
\label{suppsec:hyper}
% ============================================================

Table~\ref{supptab:hyper_3dssg} reports the sensitivity of the semantic bandwidth $\sigma_{se}$ and birth density $\lambda_{birth}$ on the 3DSSG validation set (w/o prior) under the voxel backend. The optimal setting is $\sigma_{se}=0.2$ and $\lambda_{birth}=0.3$, which differs slightly from the Gaussian backend ($\sigma_{se}=0.3$, $\lambda_{birth}=0.4$). This shift suggests that the voxel containment score provides a sharper spatial signal than the Gaussian Bhattacharyya coefficient, so a tighter semantic bandwidth is needed to prevent over-merging of spatially overlapping but semantically distinct objects. Similarly, a lower birth density compensates for the voxel backend's tendency to produce higher spatial overlap scores, which would otherwise suppress new-node creation. 

\begin{table}[htbp]
  \centering
  \caption{Performance comparison on 3DSSG validation set (w/o prior) with different semantic bandwidth $\sigma_{se}$ and birth density $\lambda_{birth}$. }
    \begin{tabular}{l|lllll|lllll}
    \toprule
    \multicolumn{1}{c|}{\multirow{2}[2]{*}{Recall@1(\%)$\uparrow$}} & \multicolumn{5}{c|}{$\sigma_{se}$} & \multicolumn{5}{c}{$\lambda_{birth}$} \\
          & \multicolumn{1}{c}{0.1} & \multicolumn{1}{c}{0.15} & \multicolumn{1}{c}{0.2} & \multicolumn{1}{c}{0.25} & \multicolumn{1}{c|}{0.3} & \multicolumn{1}{c}{0.2} & \multicolumn{1}{c}{0.25} & \multicolumn{1}{c}{0.3} & \multicolumn{1}{c}{0.35} & \multicolumn{1}{c}{0.4} \\
    \midrule
    Rel.  & 27.6  & 28.6  & \textbf{31.7}  & \underline{29.2}  & 28.4  & 27.9  & 30.5  & \textbf{31.7}  & \underline{31.2}  & 29.7  \\
    Obj.  & \textbf{63.1}  & 62.1  & \underline{62.6}  & 62.3  & 61.5  & 60.4  & 62.5  & 62.6  & \underline{64.0}  & \textbf{64.9}  \\
    Pred. & 31.4  & 32.6  & \textbf{35.4}  & \underline{33.1}  & 32.8  & 31.5  & 34.6  & \textbf{35.4}  & \textbf{35.4}  & 33.9  \\
    \bottomrule
    \end{tabular}%
  \label{supptab:hyper_3dssg}%
\end{table}%

\begin{table}[htbp]
  \centering
  \caption{Performance comparison on ReplicaSSG validation set with different semantic bandwidth $\sigma_{se}$ and birth density $\lambda_{birth}$. }
    \begin{tabular}{l|lllll|lllll}
    \toprule
    \multicolumn{1}{c|}{\multirow{2}[2]{*}{Recall@1(\%)$\uparrow$}} & \multicolumn{5}{c|}{$\sigma_{se}$} & \multicolumn{5}{c}{$\lambda_{birth}$} \\
          & \multicolumn{1}{c}{0.3} & \multicolumn{1}{c}{0.35} & \multicolumn{1}{c}{0.4} & \multicolumn{1}{c}{0.45} & \multicolumn{1}{c|}{0.5} & \multicolumn{1}{c}{0.3} & \multicolumn{1}{c}{0.35} & \multicolumn{1}{c}{0.4} & \multicolumn{1}{c}{0.45} & \multicolumn{1}{c}{0.5} \\
    \midrule
    Rel.  & 12.1  & \underline{12.5}  & \textbf{13.9}  & 11.8  & 10.9  & 10.5  & 11.4  & \textbf{13.9}  & 12.5  & \underline{12.9} \\
    Obj.  & \underline{23.6}  & 23.5  & \textbf{24.0}  & 22.8  & 22.4  & 20.8  & 23.0  & \underline{24.0}  & 23.6  & \textbf{24.1} \\
    Pred. & 16.8  & 18.0  & \textbf{19.3}  & \underline{18.4}  & 18.0  & 15.9  & 17.3  & \textbf{19.3}  & \underline{18.8}  & 18.4 \\
    \bottomrule
    \end{tabular}%
  \label{supptab:hyper_replica}%
\end{table}%

Table~\ref{supptab:hyper_replica} presents the corresponding hyperparameter analysis on the ReplicaSSG validation set under the voxel backend. The optimal configuration is $\sigma_{se}=0.4$ and $\lambda_{birth}=0.4$, matching the Gaussian backend on this dataset. This convergence is consistent with the zero-shot evaluation setting of ReplicaSSG, where the 2D model's class distributions are broader and less peaked than on 3DSSG, making the fusion layer less sensitive to the choice of spatial backend. The relationship recall peaks at 13.9\% and gradually degrades on both sides of the optimum, with the steepest drop occurring at high $\sigma_{se}$ values that over-merge distinct objects. The trends mirror those observed for the Gaussian backend in Table~5 of the main paper.

% ============================================================
\subsection{Sensitivity to Uncertainty Calibration}
\label{suppsec:calibration}
% ============================================================

Uncertainty mis-calibration is a known issue with the softmax outputs from deep neural networks~\cite{guo2017calibration}, \ie, the predictions can be overconfident in high-probability classes. To verify if PUF is sensitive to such mis-calibrations, we evaluate PUF with temperature scaling~\cite{guo2017calibration}, applying $T \in \{0.5, 1.0, 1.5\}$ to calibrate softmax outputs before fusion. 
Results in Tab.~\ref{tab:calibration} show robustness across all temperatures on both datasets. 

\begin{table}[htbp]
  \centering
  \caption{Performance (Recall \%) comparison with different scaling temperature T. Model is PUF-Gaussian w/o prior.}
    \begin{tabular}{l|ccc|ccc|ccc}
    \toprule
    \multicolumn{1}{c|}{\multirow{2}[1]{*}{Dataset}} & \multicolumn{3}{c|}{T=0.5} & \multicolumn{3}{c|}{T=1.0} & \multicolumn{3}{c}{T=1.5} \\
          & Rel.  & Obj.  & Pred. & Rel.  & Obj.  & Pred. & Rel.  & Obj.  & Pred. \\
    \midrule
    3DSSG & 34.6  & 70.3  & 40.3  & 33.9  & 69.0    & 39.2  & 33.6  & 69.5  & 37.8 \\
    ReplicaSSG & 27.0    & 31.4  & 36.2  & 25.3  & 31.0    & 35.6  & 25.2  & 30.5  & 36.4 \\
    \bottomrule
    \end{tabular}%
    % \vspace{-2mm}
  \label{tab:calibration}%
\end{table}%

Temperature scaling is a post-hoc uncertainty calibration method, with which PUF remains plug-and-play and training-free. More advanced methods~\cite{wang2023calibration} would give more accurate uncertainty calibration but require additional network retraining to varying degrees. The integration with these methods is therefore an interesting topic for future research.

\newpage

% ============================================================
\section{Additional Qualitative Results}
\label{suppsec:qualitative}
% ============================================================

\subsection{More Qualitative Results on 3DSSG}

Figure~\ref{fig:supp_vis_3dssg} compares PUF's qualitative outputs under the Gaussian and voxel backends on a representative 3DSSG scene. Both backends correctly recover most object instances and relationship edges present in the ground truth. The Gaussian backend produces slightly lower normalized entropy on most nodes and edges (e.g., \texttt{floor}: 0.10 vs.\ 0.11, \texttt{sofa}: 0.14 vs.\ 0.19), reflecting its finer-grained spatial overlap modeling via the Bhattacharyya coefficient. The lamp near the sofa (labeled as \texttt{other} in 3DSSG) is a rare object and misclassified with the voxel backend, but also carries high uncertainty with the Gaussian representation. Despite these minor differences, the two backends produce consistent graph topology and class predictions for this scene, corroborating the representation-agnostic nature of our framework. The entropy values annotated on each node and edge further demonstrate that PUF's Dirichlet representation provides interpretable, per-prediction confidence estimates regardless of the underlying 3D representation.

\setcounter{figure}{4}
\begin{figure}[htbp!]
    \centering
    \includegraphics[width=0.9\linewidth]{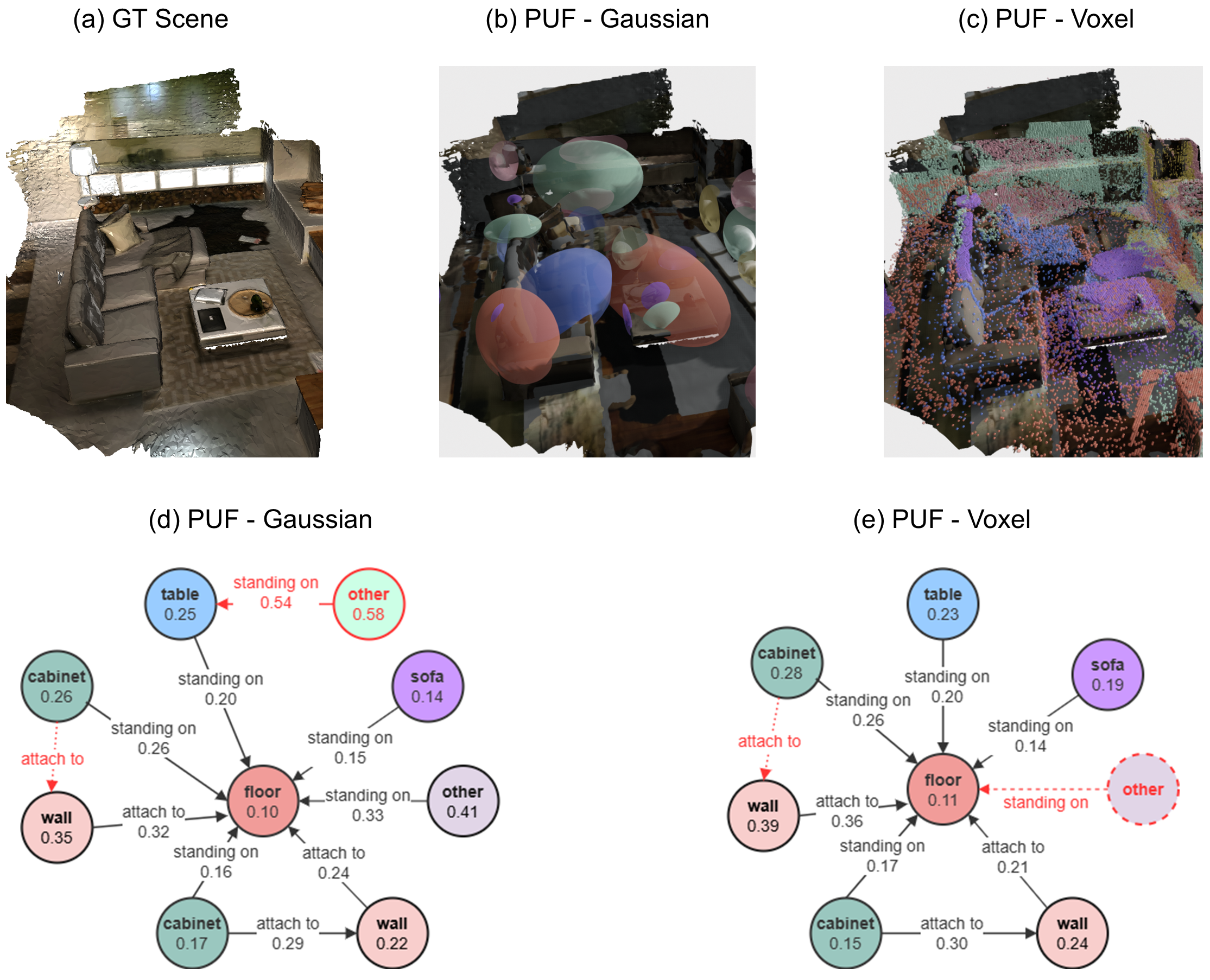}
    \caption{
        Comparison of PUF qualitative results between Gaussian and voxel 3D backend on a typical 3DSSG scene. Node colors correspond to the respective scene graphs. False and missing predictions are marked in \textcolor{red}{red}, with missing predictions in dashed line. 
    }
    \label{fig:supp_vis_3dssg}
    %\vspace{-3mm} %-3
\end{figure}

\subsection{Qualitative Results on ReplicaSSG}

Figure~\ref{fig:supp_vis_replica} visualizes PUF's predictions on two ReplicaSSG scenes (\texttt{Office0} and \texttt{Room0}) under both the Gaussian and voxel backends. In the \texttt{Office0} scene, both backends correctly detect all major objects and recover the dominant spatial relationships such as \texttt{near} and \texttt{with}. In the more cluttered \texttt{Room0} scene, both backends successfully reconstruct the majority of the scene graph structure. False predictions (marked in red) carry notably higher normalized entropy in both cases, consistent with the calibration behavior observed on 3DSSG in the main paper. A major source of error is missing objects (\eg, atypical objects \texttt{screen} in \texttt{Office0} and \texttt{table} in \texttt{Room0}) due to 2D detector failure. Although cannot be mitigated by PUF, this type of error will diminish as more powerful detection models are available. Overall, the qualitative results on ReplicaSSG confirm that PUF generalizes to unseen environments in a zero-shot setting, and that the uncertainty estimates remain informative across both backends and datasets.

\begin{figure}[htbp!]
    \centering
    \includegraphics[width=\linewidth]{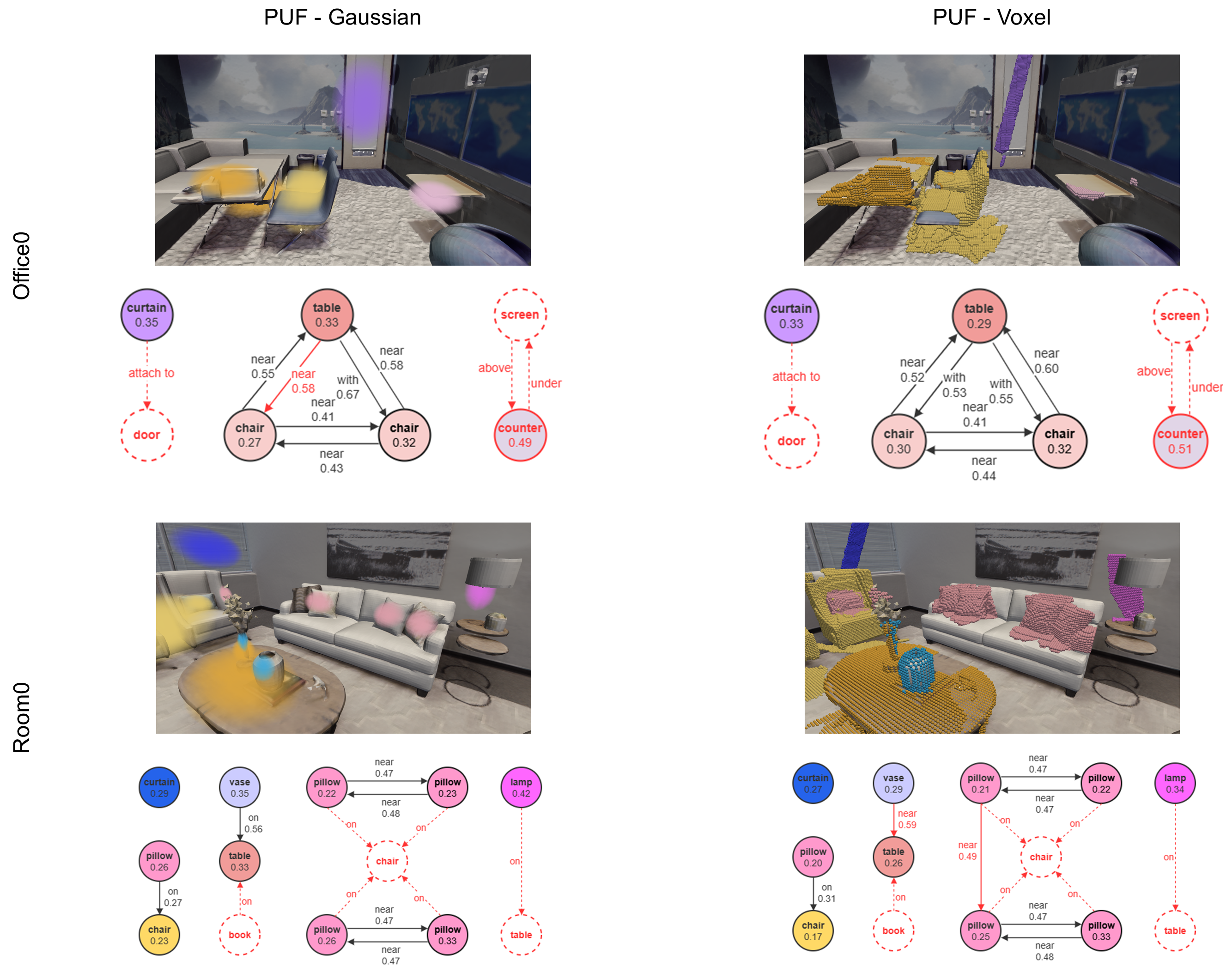}
    \caption{
        PUF qualitative results with Gaussian and voxel 3D backend on two typical ReplicaSSG scenes. Node colors correspond to the respective scene graphs. False and missing predictions are marked in \textcolor{red}{red}, with missing predictions in dashed line. 
    }
    \label{fig:supp_vis_replica}
    %\vspace{-3mm} %-3
\end{figure}

% ---- Bibliography ----
%
% BibTeX users should specify bibliography style 'splncs04'.
% References will then be sorted and formatted in the correct style.
%
\bibliographystyle{splncs04}
\bibliography{main}
\end{document}